\pgfplotsset{compat=1.17}
\newtheorem{theorem}{Theorem}
\newtheorem{corollary}{Corollary}[theorem]
\title{A Framework for Cluster and Classifier Evaluation in the Absence of Reference Labels}
\author{Robert J. Joyce}
\affiliation{\institution{Booz Allen Hamilton\\University of Maryland, Baltimore County}\country{USA}}
\email{joyce\_robert2@bah.com}
\author{Edward Raff}
\affiliation{\institution{Booz Allen Hamilton\\University of Maryland, Baltimore County}\country{USA}}
\email{raff\_edward@bah.com}
\author{Charles Nicholas}
\affiliation{\institution{University of Maryland, Baltimore County}\country{USA}}
\email{nicholas@umbc.edu}
\begin{document}

\begin{CCSXML}
<ccs2012>
<concept>
<concept_id>10010147.10010257.10010339</concept_id>
<concept_desc>Computing methodologies~Cross-validation</concept_desc>
<concept_significance>500</concept_significance>
</concept>
<concept>
<concept_id>10010147.10010257.10010258.10010259</concept_id>
<concept_desc>Computing methodologies~Supervised learning</concept_desc>
<concept_significance>500</concept_significance>
</concept>
</ccs2012>
\end{CCSXML}

\ccsdesc[500]{Computing methodologies~Cross-validation}
\ccsdesc[500]{Computing methodologies~Supervised learning}

\begin{abstract}
In some problem spaces, the high cost of obtaining ground truth labels necessitates use of lower quality reference datasets. It is difficult to benchmark model performance using these datasets, as evaluation results may be biased. We propose a supplement to using reference labels, which we call an approximate ground truth refinement (AGTR). Using an AGTR, we prove that bounds on specific metrics used to evaluate clustering algorithms and multi-class classifiers can be computed without reference labels. We also introduce a procedure that uses an AGTR to identify inaccurate evaluation results produced from datasets of dubious quality. Creating an AGTR requires domain knowledge, and malware family classification is a task with robust domain knowledge approaches that support the construction of an AGTR. We demonstrate our AGTR evaluation framework by applying it to a popular malware labeling tool to diagnose over-fitting in prior testing and evaluate changes whose impact could not be meaningfully quantified under previous data.
\end{abstract}

\keywords{malware, label quality, classifier evaluation}

\maketitle

\section{Introduction} \label{sec:introduction}

Evaluating the performance of a real-world malware classifier, or a system which can cluster malware by family, requires large amounts of labeled data. Simultaneously, human-annotated labels for malware are uniquely expensive - unable to be crowdsourced via services such Mechanical Turk \cite{mechanicalturk} and requiring highly skilled knowledge labor. Existing reference datasets for evaluating the performance of these models are insufficient, so the focus of our research is to ask the following: 

\begin{enumerate}
\item Can robust statements about a system’s overall performance be made using only large, unlabeled corpora?
\vspace*{4pt}
\item Can the impact of design changes upon such a system's performance be quantified using only large, unlabeled corpora?
\end{enumerate}

Our work answers both cases in the affirmative, via a mechanism that we term an approximate ground truth refinement (AGTR). An AGTR is an incomplete clustering of a dataset by ground truth reference label, in which data points belonging to the same cluster share an (unknown) ground truth reference label, but the relationship between data points in different clusters is unknown. Since an AGTR of a dataset informs a subset of data point relationships, it can be used to glean some knowledge about model performance, even if the reference labels themselves are not known. In practice, creation of such a clustering requires domain knowledge effort that can group "alike" data points with high fidelity, and users must account for a minimal rate of error inherent to this approach. AGTRs become increasingly effective at model evaluation the more similar they are to the (unknown) ground truth. They are best employed in problem spaces with a large number of classes and where determining the label of an individual data points is difficult, but grouping similar data points is more tractable.

Using an AGTR with no more than $\hat{\epsilon}$ errors, we provide provable bounds on specific metrics used to evaluate any system that can group samples into sub-populations (e.g, traditional clustering, or by treating predicted class labels as sub-populations). Even if an AGTR’s error rate is not well-specified, the relative rankings produced by an AGTR are invariant to $\hat{\epsilon}$, still allowing for robust conclusions. We discuss the properties an ideal AGTR should have and demonstrate a method for constructing an AGTR from a corpus of malware in the Windows Portable Executable (PE) file format using a well-known and battle-tested metadata hashing algorithm. AGTRs can also be used to test the quality of noisy reference labels and the validity of evaluation results, and we use these capabilities to provide evidence that the original benchmark results for the seminal AVClass malware classifier \cite{avclass} may be over-fit. Finally, AGTRs can be used to compare the performance of intrinsically similar classifiers and clustering methods. In a case study, we evaluate the performance impact of various modifications to AVClass, which could not be previously elucidated due to a lack of precise labels.

\subsection{Related Work}
\label{sec:relatedWork}

While there is increasing focus on the accurate quantification, reliability, and reproducibility of machine learning \cite{Bouthillier2021,marie-etal-2021-scientific,pmlr-v119-engstrom20a,Barz2019,Musgrave2020,10.1145/3383313.3412489,pmlr-v97-bouthillier19a,Raff2019_quantify_repro}, few works have considered the use of unlabeled data to further evaluate models on a labeled task. The most similar work to ours is \citet{Deng2021}, which proposes AutoEval, a method for estimating the accuracy of a multi-class classifier without a labeled reference dataset by using the training set and dataset-level feature statistics. The authors implemented this by generating a synthetic meta-dataset of images by applying various transformations to each image in the original dataset, then training a regression model to predict accuracy. Other related work includes \citet{Jiang2021}, which found that the error of a neural network can be estimated by observing disagreement rates of unlabeled data and \citet{novak2019}, which used unlabeled data to augment the automatic evaluation of surface cohesion of essays written in Czech. Finally, \citet{oliver2018} challenged existing practices for evaluating self-supervised learning models trained using unlabeled data. We are aware of no prior work which provably bounds the performance of a multi-class classifier or clustering algorithm using only unlabeled data. Other malware detection concerns around the proper evaluation of false positive rates also exist \cite{Nguyen2021}, but are beyond the scope of our study.

\subsection{Clustering Terminology and Metrics}
\label{cluster-metrics}

A variety of external validity indices are used to evaluate the performance of clustering algorithms using labeled reference data. In this paper, we focus on the metrics precision and recall. Historically, precision and recall have been used for evaluating the performance of information retrieval systems \cite{kent}. \citet{bayer} introduced alternate definitions of precision and recall that function as cluster validity indices, for the purpose of evaluating malware family clustering. Precision penalizes the presence of impure clusters, \emph{i.e.}, clusters containing data points belonging to separate reference clusters. Conversely, recall penalizes instances in which data points belonging to the same reference cluster are not grouped into the same predicted cluster \cite{li}. Precision and recall can be thought of as somewhat analogous to homogeneity and completeness but with minor differences. Let $M$ be a labeled reference dataset consisting of $m$ unique data points. Let $C = \{C_{i}\}_{1\leq i \leq c}$ and $D = \{D_{j}\}_{1\leq j \leq d}$ each partition $M$, where $C$ is the predicted clustering of $M$, $D$ is a clustering of $M$ by reference label (i.e. the desired cluster assignments, which we refer to as a \emph{reference clustering}), and $c$ and $d$ are the number of clusters in $C$ and $D$ respectively. The formal definitions of precision and recall are provided below:

\theoremstyle{definition}
\begin{definition} \label{def:precision}
$Precision(C, D) = \frac{1}{m} \;\sum\limits_{i=1}^{c} |C_{i} \cap D_{f(i)}|$ 
\end{definition}

\begin{definition} \label{def:recall}
$Recall(C, D) = \frac{1}{m} \;\sum\limits_{j=1}^{d} |C_{g(j)} \cap D_{j}|$
\end{definition}

Definitions \ref{def:precision} and \ref{def:recall} require the cluster mapping functions $f : \{1...c\} \mapsto \{1...d\}$ and $g : \{1...d\} \mapsto \{1...c\}$. These functions can be given any definition, but when used to evaluate clusterings they are typically defined as \cite{li}:

\vspace*{0.1cm}
\begin{center}
$f(i) = $ arg$\max\limits_{j}|C_{i} \cap D_{j}|$ and $g(j) = $ arg$\max\limits_{i}|C_{i} \cap D_{j}|$
\end{center}

Under these definitions, $f$ and $g$ map each predicted cluster to the reference cluster for which there is maximal overlap and vice versa. In Figure \ref{fig:CD}, $\textit{Precision(C, D)} = 0.75$ and  $\textit{Recall(C, D)} = 0.5$.

\begin{figure}[!t]
    \centering
    \adjustbox{max width=\columnwidth}{%
        \input{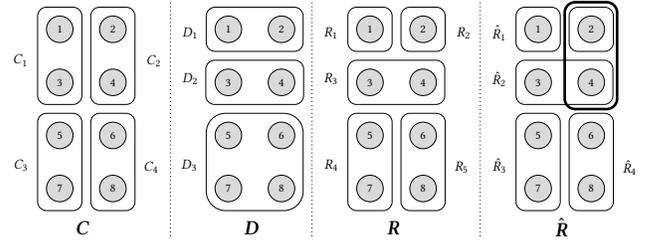}
    }
    \caption{
    Four partitions of a hypothetical dataset with eight data points. The predicted clusters (left, "$C$") would ideally be evaluated using ground truth reference clusters (second from left, "$D$"). A GTR (second from right, "$R$") informs a subset of the data point relationships in $D$ (e.g., 5 and 7 ($R_4$) must belong to the same reference cluster, but without $D$ it is unknown whether the members of $R_4$ and $R_5$ share a reference cluster). An AGTR (right, "$\hat{R}$") is a GTR with $\epsilon$ errors. $\hat{R}_2$ incorrectly groups data point 2 with 3 and 4, so $\hat{R}$ has $\epsilon$ = 1 errors. If data point 2 is removed from $\hat{R}_{2}$, $\hat{R}$ would become a GTR.
    }
    \label{fig:CDRhatR}
    \label{fig:CD}
    \label{fig:DR}
    \label{fig:DRhat}
\end{figure}
\vspace*{-2pt}

\subsection{Classification Terminology and Metrics}
\label{sec:classifier-metrics}

Although precision and recall were originally intended to be used as clustering validity indices, their definitions also permit evaluation of multi-class classifiers \cite{li}. Specifically, these metrics are useful when the set of labels used by the classifier do not match the set of labels used by the reference dataset. This is an issue, for example, with malware family naming - different vendors often refer to the same family of malware using different terminology. The functions $f$ and $g$ enable manually-defined mappings between label sets, but they are most commonly assigned the definitions listed in Section \ref{cluster-metrics}. To evaluate a classifier using precision and recall, the predicted labels must be converted into a predicted clustering $C$ and the reference labels must be converted into a reference clustering $D$. This is achieved by assigning all data points which share a label to the same cluster. Note that precision and recall cannot evaluate whether a classifier has predicted the correct labels for data points; rather, they quantify the structural similarity between $C$ and $D$ (which were generated from the from the predicted labels and reference labels respectively).

Additionally, Li \emph{et al.} showed that the accuracy of a classifier can be computed as a special case of precision and recall. Accuracy measures the proportion of data points whose reference labels were correctly predicted by the classifier. In order to compute accuracy, the set of labels used by the classifier must be identical to the set of labels used by the reference dataset, e.g. $c$ = $d$, and $f$ and $g$ are defined as the identity function \cite{li}.

\begin{center}
$\forall i$, $1 \leq i \leq c$, $f(i) = i$ and $g(i) = i$.
\end{center}

The formal definition of accuracy is given by \cite{li}:

\begin{definition}
\label{def:accuracy}
If $f$ and $g$ are the identity function, \textit{Accuracy(C, D)} = \textit{Precision(C, D)} = \textit{Recall(C, D)}
\end{definition}

In Figure \ref{fig:CD}, $\textit{Recall(C, D)}$ cannot be computed because there is not a one-to-one mapping between the predicted clusters $C$ and the reference clusters $D$.

\section{Approximate Ground Truth Refinements} \label{sec:computing_gtr_agtr}

In this section, we introduce the concept of a ground truth refinement (GTR) and show that a GTR can be used to find provable bounds on precision, recall, and accuracy. In practice, when constructing a GTR from a dataset we assume that a small number of errors occur. We call an imperfect GTR an approximate ground truth refinement (AGTR). We show that bounds on these evaluation metrics can still be proven using an AGTR if errors in the AGTR construction process are properly accounted for. Finally, we propose a framework that uses an AGTR to evaluate clustering algorithms and multi-class classifiers when satisfactory reference data is unavailable. All proofs are given in \autoref{sec:proofs}.

\subsection{Set Partition Refinements}
\label{sec:refinements}

A key element of this work is the concept of a set partition refinement. Suppose $R$ and $S$ are two partitions of the same set $M$. $R$ is a \emph{refinement} of $S$ if each set within $R$ is a subset of some set in $S$ \cite{paige1987}.

\begin{definition} \label{def:refinement}
If\, $\forall R_{k} \in R$, $\exists S_{j} \in S$ s.t. $R_{k} \subseteq S_{j}$ then $R$ is a set partition refinement of $S$.
\end{definition}

Set partition refinements can also be considered from an alternate perspective. If $R$ is a refinement of $S$, then $S$ can be constructed by iteratively merging sets within $R$. Specifically, each set $S_{j} \in S$ is equivalent to the union of some unique set of sets within $R$. 

\begin{enumerate}[label=\normalfont{Property \arabic*.},itemindent=35pt,align=left,leftmargin=0em,font=\textbf]
  \item $\forall S_{j} \in S$, $\exists!Q_{j}=\{Q_{j\ell}\}_{1 \leq \ell \leq q_{j}}$ s.t. $S_{j} = \bigcup\limits_{\ell=1}^{q_{j}}Q_{j\ell}$ and $\forall Q_{j\ell} \in Q_j$, $Q_{j\ell} \in R$
\end{enumerate}
\vspace*{5pt}

In Section \ref{sec:GTR}, we use Definition \ref{def:refinement} and Property 1 to prove properties of ground truth refinements, which are a type of set partition refinement.

\subsection{Ground Truth Refinements}
\label{sec:GTR}

A \emph{ground truth refinement} (GTR) of a dataset is a clustering where all data points in a cluster are members of the same ground truth reference cluster. Importantly, the opposite is not necessarily true, as data points in the same reference cluster can belong to different clusters in the GTR.

\begin{definition}
\label{def:gtr}
If $D$ is a ground truth reference clustering and $R$ is a refinement of $D$, then $R$ is a ground truth refinement.
\end{definition}

Recall that for a dataset $M$, $C = \{C_{i}\}_{1\leq i \leq c}$ is the predicted clustering and $D = \{D_{j}\}_{1\leq j \leq d}$ is the reference clustering. Let $D$ have ground truth confidence and let $R = \{R_{k}\}_{1\leq k \leq r}$ be a GTR of $D$. Since $R$ partitions $M$, it is possible to compute the precision and recall of $C$ with respect to $R$ rather than $D$. An important trait of a GTR is that it does not require reference labels. Since $R$ is unlabeled, we map each predicted cluster in $C$ to the cluster in the $R$ for which there is maximal overlap and vice versa. Let the functions for mapping between the predicted clusters and the GTR $f' : \{1...c\} \mapsto \{1...r\}$ and $g' : \{1...r\} \mapsto \{1...c\}$ be defined as:

\vspace*{0.1cm}
\begin{center}
$f'(i) = $ arg$\max\limits_{k}|C_{i} \cap R_{k}|$ and $g'(k) = $ arg$\max\limits_{i}|C_{i} \cap R_{k}|$
\end{center}

Using Definition \ref{def:refinement}, we prove that the precision of a clustering algorithm or multi-class classifier computed using the ground truth reference clustering is bounded below by its precision computed using a GTR. Similarly, using Property 1, we prove that recall computed using a GTR is always an upper bound on recall computed using the ground truth reference clustering. Because accuracy, precision, and recall are all equivalent in a special case (e.g. there is a one-to-one mapping between the predicted and reference clusterings), we prove that recall computed using a GTR is also always an upper bound on the accuracy of a classifier. These bounds provide the foundation for evaluating clustering algorithms and multi-class classifiers using an AGTR.

\setcounter{theorem}{0}
\begin{theorem} \label{thm:precisiongtr}
\textit{Precision(C, R)} $\leq$ \textit{Precision(C, D)}
\end{theorem}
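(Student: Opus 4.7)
The plan is to prove the inequality term-by-term under the summation defining precision. By Definition \ref{def:precision}, we have
\[
\textit{Precision}(C,R) \;=\; \frac{1}{m}\sum_{i=1}^{c}\bigl|C_i \cap R_{f'(i)}\bigr|
\quad\text{and}\quad
\textit{Precision}(C,D) \;=\; \frac{1}{m}\sum_{i=1}^{c}\bigl|C_i \cap D_{f(i)}\bigr|.
\]
Since the sums have the same index set and the normalizing factor $1/m$ is identical, it suffices to show that for every $i \in \{1,\dots,c\}$,
\[
\bigl|C_i \cap R_{f'(i)}\bigr| \;\leq\; \bigl|C_i \cap D_{f(i)}\bigr|.
\]

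To prove this pointwise inequality, I would chain two bounds. First, fix $i$ and let $k = f'(i)$. Because $R$ is a refinement of $D$, Definition \ref{def:refinement} guarantees a (unique) index $j_k$ such that $R_k \subseteq D_{j_k}$. Monotonicity of intersection cardinality then gives $|C_i \cap R_k| \leq |C_i \cap D_{j_k}|$. Second, by the definition of $f$ as the argmax over reference clusters, $|C_i \cap D_{j_k}| \leq \max_j |C_i \cap D_j| = |C_i \cap D_{f(i)}|$. Composing the two inequalities yields the desired pointwise bound, and summing over $i$ and dividing by $m$ completes the proof.

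I do not expect any real obstacle here: the argument is essentially a two-step chain whose only nontrivial ingredient is the refinement property (every GTR cluster sits inside some reference cluster), together with the argmax definition of $f$. The one thing to be careful about is that $f'$ and $f$ are different mapping functions, so the intermediate reference cluster $D_{j_k}$ determined by $R_{f'(i)}$ need not coincide with $D_{f(i)}$; the maximality of $f(i)$ is exactly what bridges this gap. No case analysis, no Property 1, and no assumption about whether $c = r$ or $c = d$ is needed — the bound holds unconditionally over all predicted clusterings $C$ and all GTRs $R$ of $D$.
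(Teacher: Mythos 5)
Your proof is correct and follows essentially the same route as the paper's: use the refinement property to place $R_{f'(i)}$ inside some reference cluster, bound $|C_i \cap R_{f'(i)}|$ by the overlap with that cluster, then invoke the argmax definition of $f$ to pass to $D_{f(i)}$, and sum. Your explicit remark that $f'$ and $f$ need not point to the same reference cluster and that the maximality of $f(i)$ bridges the gap is exactly the (implicit) key step in the paper's argument.
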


\begin{theorem} \label{thm:recallgtr}
\textit{Recall(C, R)} $\geq$ \textit{Recall(C, D)}
\end{theorem}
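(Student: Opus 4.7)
The plan is to exploit Property 1 of the refinement: every reference cluster $D_j$ decomposes uniquely as a disjoint union $D_j = \bigcup_{\ell=1}^{q_j} Q_{j\ell}$, where each $Q_{j\ell}$ is one of the GTR clusters $R_k$. Because $R$ itself partitions $M$, as $(j,\ell)$ ranges over all valid pairs, the family $\{Q_{j\ell}\}$ is exactly an enumeration of $\{R_k\}_{1 \leq k \leq r}$, each appearing once.

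First I would fix an arbitrary $j$ and use the disjoint decomposition of $D_j$ to write
\[
|C_{g(j)} \cap D_j| \;=\; \sum_{\ell=1}^{q_j} |C_{g(j)} \cap Q_{j\ell}|.
\]
Next, for each $(j,\ell)$, let $k(j,\ell)$ be the index with $Q_{j\ell} = R_{k(j,\ell)}$. Since $g'(k) = \arg\max_i |C_i \cap R_k|$, the index $g(j)$ can only do at most as well as $g'(k(j,\ell))$ on that particular $R_k$, so
\[
|C_{g(j)} \cap Q_{j\ell}| \;\leq\; |C_{g'(k(j,\ell))} \cap R_{k(j,\ell)}|.
\]
Summing over $\ell$ and then over $j$, and using that the reindexing $(j,\ell) \mapsto k(j,\ell)$ is a bijection onto $\{1,\ldots,r\}$, I obtain
\[
\sum_{j=1}^{d} |C_{g(j)} \cap D_j| \;\leq\; \sum_{k=1}^{r} |C_{g'(k)} \cap R_k|.
\]
Dividing both sides by $m$ yields $\textit{Recall}(C,D) \leq \textit{Recall}(C,R)$.

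The key conceptual point — and the one place where the proof could easily go wrong — is the bijection step. Intuitively, a GTR is strictly more fine-grained than $D$, so each predicted cluster gets ``credit'' against many small $R_k$'s instead of one big $D_j$, and since $g'$ independently picks the best-matching predicted cluster for each $R_k$, the per-piece maxima add up to at least the monolithic match $|C_{g(j)} \cap D_j|$. The main obstacle is just bookkeeping: verifying that every $R_k$ arises as exactly one $Q_{j\ell}$ (which follows because both $R$ and $\{Q_{j\ell}\}_{j,\ell}$ are partitions of $M$ and the $Q_{j\ell}$ are drawn from $R$) so that no term on the right-hand side is missed or double-counted. Once that indexing is pinned down, the inequality is a pointwise one-line consequence of the definition of $g'$ as an argmax.
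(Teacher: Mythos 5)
Your proof is correct and follows essentially the same route as the paper's: decompose each $D_j$ via Property 1 into GTR clusters, bound $|C_{g(j)} \cap Q_{j\ell}|$ by $|C_{g'(k)} \cap R_k|$ using the argmax definition of $g'$, and sum using the bijection between the $Q_{j\ell}$ and the $R_k$. Your explicit attention to the reindexing bijection is, if anything, slightly more careful than the paper's own write-up.
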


\begin{corollary} \label{cor:accuracygtr}
\textit{Recall(C, R)} $\geq$ \textit{Accuracy(C, D)}
\end{corollary}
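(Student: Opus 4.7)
My plan is to deduce the corollary by composing Theorem \ref{thm:recallgtr} with a simple ``argmax dominates identity'' inequality that relates recall to accuracy. Specifically, I would first invoke Theorem \ref{thm:recallgtr} to obtain $\textit{Recall}(C, R) \geq \textit{Recall}(C, D)$, and then separately show $\textit{Recall}(C, D) \geq \textit{Accuracy}(C, D)$, after which the conclusion follows by transitivity.

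For the second inequality, I would use the fact that accuracy is, by Definition \ref{def:accuracy}, precisely recall evaluated under the special choice of $g$ as the identity function (which is only well-defined when $c = d$, as required for accuracy to be meaningful in the first place). Meanwhile, $\textit{Recall}(C, D)$ as given by Definition \ref{def:recall} uses $g(j) = \arg\max_{i} |C_{i} \cap D_{j}|$. Because $g$ is chosen to maximize $|C_{g(j)} \cap D_{j}|$ over all possible choices of $i$ for each fixed $j$, in particular $|C_{g(j)} \cap D_{j}| \geq |C_{j} \cap D_{j}|$ for every $j$. Summing over $j$ and dividing by $m$ yields
\[
\textit{Recall}(C, D) \;=\; \frac{1}{m} \sum_{j=1}^{d} |C_{g(j)} \cap D_{j}| \;\geq\; \frac{1}{m} \sum_{j=1}^{d} |C_{j} \cap D_{j}| \;=\; \textit{Accuracy}(C, D).
\]

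Chaining this with Theorem \ref{thm:recallgtr} immediately delivers $\textit{Recall}(C, R) \geq \textit{Recall}(C, D) \geq \textit{Accuracy}(C, D)$, as claimed. The only subtle point, and arguably the ``hard'' part, is the bookkeeping around the two different mapping functions: the $g'$ used in $\textit{Recall}(C, R)$ (argmax into $R$), the $g$ used in $\textit{Recall}(C, D)$ (argmax into $D$), and the identity used in $\textit{Accuracy}(C, D)$. Once one notices that the argmax choice cannot do worse than the identity choice term-by-term, the rest of the corollary reduces to invoking Theorem \ref{thm:recallgtr} verbatim, so I would expect the proof to be essentially one line plus a short justification of the identity-vs-argmax comparison.
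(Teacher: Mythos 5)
Your proposal is correct and follows essentially the same route as the paper's own proof: invoke Theorem \ref{thm:recallgtr} to get $\textit{Recall}(C,R) \geq \textit{Recall}(C,D)$, then observe that the argmax choice of $g$ dominates the identity choice termwise so that $\textit{Recall}(C,D) \geq \textit{Accuracy}(C,D)$ by Definition \ref{def:accuracy}. Your write-up merely makes the termwise inequality $|C_{g(j)} \cap D_{j}| \geq |C_{j} \cap D_{j}|$ explicit, which the paper states but does not spell out.
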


\subsection{Approximate Ground Truth Refinements}
\label{sec:AGTR}

Unfortunately, (with the exception of a trivial case discussed in \autoref{sec:trivial}), it is impossible to determine whether or not a clustering is a GTR without using the ground truth reference clustering as confirmation. Because we intend for GTRs to be used when satisfactory reference datasets are not available, this is problematic. To circumvent this issue, when attempting to construct a GTR we assume that the resulting clustering is very similar to a GTR, but has a small number of data points $\epsilon$ which violate the properties of a refinement. We call such a clustering an \emph{approximate ground truth refinement} (AGTR).

\setcounter{theorem}{5}
\begin{definition}
\label{def:agtr}
If $R$ is a ground truth refinement and $\hat{R}$ can be made equivalent to $R$ by correcting the cluster membership of $\epsilon$ data points, then $\hat{R}$ is an approximate ground truth refinement.
\end{definition}

Consider an AGTR $\hat{R}$ with $\epsilon$ erroneous data points. Even without knowing which data points must be corrected to transform $\hat{R}$ into a GTR, we can again derive bounds on \textit{Precision(C, D)} as well as upper bounds on \textit{Recall(C, D)} and \textit{Accuracy(C, D)} using $\hat{R}$ and $\epsilon$. To do this, we first show that the precision and recall change in predictable ways when a reference clustering is modified. Let $S$ be an arbitrary partition of a dataset $M$ with $m$ data points and let $\hat{S}$ be identical to $S$ but with a single data point belonging to a different cluster. When the precision and recall of $C$ are measured with respect to $S$ and $\hat{S}$, the resulting metric values share the following relationship:

\setcounter{theorem}{2}
\begin{theorem} \label{thm:precision1err}
$\lvert$\textit{Precision(C, S)} $-$ \textit{Precision(C, $\hat{S}$)}$\rvert$ $\leq \frac{1}{m}$
\end{theorem}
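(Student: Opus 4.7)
The plan is to reduce the claim to a one-index perturbation argument on the sum $m \cdot Precision(C,\cdot) = \sum_{i=1}^c \max_j |C_i \cap (\cdot)_j|$, showing that moving a single point in the reference partition changes at most one term of this sum by at most $1$ in absolute value. Specifically, let $x$ be the unique data point whose cluster assignment differs between $S$ and $\hat{S}$; say $x \in S_a$ but $x \in \hat{S}_b$ with $b \neq a$. To keep the index set common between both partitions, I would pad either partition with empty clusters as needed so that the cases ``$\hat{S}_a$ becomes empty'' and ``$\hat{S}_b$ is freshly created'' pose no notational trouble. Let $C_{i^*}$ be the unique predicted cluster containing $x$.

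Next, I would observe the two localization facts that drive the whole proof. First, for every predicted cluster $C_i$ with $i \neq i^*$, we have $|C_i \cap \hat{S}_j| = |C_i \cap S_j|$ for every $j$, so $\max_j |C_i \cap \hat{S}_j| = \max_j |C_i \cap S_j|$ and those terms cancel out of the difference. Second, for $C_{i^*}$ itself, the only altered intersections are $|C_{i^*} \cap \hat{S}_a| = |C_{i^*} \cap S_a| - 1$ and $|C_{i^*} \cap \hat{S}_b| = |C_{i^*} \cap S_b| + 1$, with every other $|C_{i^*} \cap \hat{S}_j|$ equal to $|C_{i^*} \cap S_j|$.

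The last step is the auxiliary lemma that $\bigl|\max_j |C_{i^*} \cap S_j| - \max_j |C_{i^*} \cap \hat{S}_j|\bigr| \leq 1$. By symmetry it suffices to lower bound the new max in terms of the old: letting $j^*$ attain $\max_j |C_{i^*} \cap S_j|$, if $j^* \neq a$ then $|C_{i^*} \cap \hat{S}_{j^*}| \geq |C_{i^*} \cap S_{j^*}|$, and if $j^* = a$ then $|C_{i^*} \cap \hat{S}_a| = |C_{i^*} \cap S_a| - 1$; in both cases the new max is at least the old max minus one. Combining all three steps, $|m \cdot Precision(C,S) - m \cdot Precision(C,\hat{S})| \leq 1$, and dividing by $m$ yields the theorem.

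The main obstacle is conceptual rather than technical: the mapping function $f$ is defined via $\arg\max$ and therefore may change between $S$ and $\hat{S}$, which could at first appear to cause discontinuities. The resolution is to rewrite $Precision$ directly in terms of $\sum_i \max_j |C_i \cap (\cdot)_j|$, bypassing $f$ entirely and reducing the argument to a standard ``max is $1$-Lipschitz under unit coordinate perturbations'' observation. The only other subtlety is bookkeeping around empty or newly created clusters, which is dispatched by the padding convention mentioned above.
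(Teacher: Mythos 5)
Your proof is correct and follows essentially the same route as the paper's: both rewrite $m\cdot \mathit{Precision}(C,\cdot)$ as $\sum_i \max_j |C_i\cap(\cdot)_j|$, localize the effect of moving one data point to the term(s) associated with the predicted cluster containing it, and bound the resulting change of each max by $1$. If anything your version is slightly cleaner, since the paper additionally tracks a second predicted cluster $C_b$ whose term cannot actually change, whereas you correctly observe that only the single term for the cluster $C_{i^*}\ni x$ is affected.
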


\begin{theorem} \label{thm:recall1err}
$\lvert$\textit{Recall(C, S)} $-$ \textit{Recall(C, $\hat{S}$)}$\rvert$ $\leq \frac{1}{m}$
\end{theorem}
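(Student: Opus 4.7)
The plan is to exploit the very local nature of the perturbation that turns $S$ into $\hat{S}$: exactly one point $x$ changes clusters, so only two of the reference clusters are affected. Name them: say $x\in S_a$ originally and $x\in \hat S_b$ after the change, while $\hat S_j=S_j$ for every $j\neq a,b$. (Degenerate cases, where $S_a$ becomes empty or $\hat S_b$ is a brand new singleton cluster, are absorbed by treating missing clusters as empty; empty clusters contribute $0$ to the recall sum.)

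Next I would rewrite recall so the mapping function $g$ disappears from the argument. Because $g(j)=\arg\max_i|C_i\cap S_j|$, we have
\[
\mathrm{Recall}(C,S)=\frac{1}{m}\sum_{j=1}^{d}\max_{i}\,|C_i\cap S_j|,
\]
and analogously for $\hat S$. Since only clusters $a$ and $b$ differ, the difference collapses to
\[
\mathrm{Recall}(C,S)-\mathrm{Recall}(C,\hat S)=\frac{1}{m}\Bigl[(M_a+M_b)-(\hat M_a+\hat M_b)\Bigr],
\]
where $M_j:=\max_i|C_i\cap S_j|$ and $\hat M_j:=\max_i|C_i\cap \hat S_j|$. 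So it suffices to bound $(M_a+M_b)-(\hat M_a+\hat M_b)$ in absolute value by $1$.

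The core step is a pointwise comparison of the intersections, from which the max bounds follow. For every predicted cluster $C_i$, $|C_i\cap \hat S_a|=|C_i\cap S_a|-\mathbf{1}[x\in C_i]$ and $|C_i\cap \hat S_b|=|C_i\cap S_b|+\mathbf{1}[x\in C_i]$. Taking maxes: each $|C_i\cap \hat S_a|\le |C_i\cap S_a|\le M_a$ gives $\hat M_a\le M_a$; plugging in $i^\star=\arg\max_i|C_i\cap S_a|$ gives $\hat M_a\ge M_a-1$. Symmetrically, $M_b\le \hat M_b\le M_b+1$. Adding the two pairs of inequalities yields $-1\le (M_a+M_b)-(\hat M_a+\hat M_b)\le 1$, and dividing by $m$ closes the proof.

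The only real obstacle is that $g$ (the argmax) can shift when we pass from $S$ to $\hat S$, so one cannot directly track a fixed index; the remedy is precisely Step 2 above, writing recall as an index-free maximum before comparing. Aside from that, the bookkeeping for degenerate clusters and the compensating $+1/-1$ structure between the $a$ and $b$ terms is the subtlety to state carefully, but no deep argument is needed.
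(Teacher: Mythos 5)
Your proof is correct and is essentially the paper's own argument: the paper likewise isolates the two affected clusters ($S_x$ losing the point, $S_y$ gaining it, with $S_y$ possibly empty), bounds each per-cluster maximum-overlap term by $\max_i|C_i\cap S_x|-1\le\max_i|C_i\cap \hat S_x|\le\max_i|C_i\cap S_x|$ and symmetrically for $y$, and sums. Your explicit rewriting of recall as $\frac{1}{m}\sum_j\max_i|C_i\cap S_j|$ is just a cleaner way of stating what the paper does implicitly through $g$ and $\hat g$.
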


Theorems \ref{thm:precision1err} and \ref{thm:recall1err} show that precision and recall can vary by up to $\pm \frac{1}{m}$ when the cluster membership of a single data point in the reference clustering is changed. Therefore, if $\epsilon$ cluster labels in $\hat{R}$ are erroneous, the difference between $\textit{Precision(C, R)}$ and $\textit{Precision(C, $\hat{R}$)}$ as well as between $\textit{Recall(C, R)}$ and $\textit{Recall(C, $\hat{R}$)}$ is at most $\pm \frac{\epsilon}{m}$

\setcounter{theorem}{3}
\begin{corollary} \label{thm:precisionerr}
$\lvert$\textit{Precision(C, R)} $-$ \textit{Precision(C, $\hat{R}$)}$\rvert$ $\leq \frac{\epsilon}{m}$
\end{corollary}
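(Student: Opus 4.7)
The plan is to prove Corollary 4 by induction on $\epsilon$, using Theorem 3 as the inductive step, glued together by the triangle inequality. By Definition 6, $\hat{R}$ can be transformed into the GTR $R$ by correcting the cluster membership of exactly $\epsilon$ data points. I will make this a \emph{sequence} of single-point corrections rather than a simultaneous one, which lets me invoke Theorem 3 at each stage.

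Concretely, I would fix any ordering $p_1, p_2, \dots, p_\epsilon$ of the erroneous data points and define a chain of partitions $\hat{R} = S_0, S_1, \dots, S_\epsilon = R$ of $M$, where $S_t$ is obtained from $S_{t-1}$ by moving the single data point $p_t$ into the cluster it occupies in $R$. Each consecutive pair $(S_{t-1}, S_t)$ therefore differs in the cluster assignment of exactly one point, so Theorem 3 applies directly and yields
\begin{equation*}
\bigl|\textit{Precision}(C, S_{t-1}) - \textit{Precision}(C, S_t)\bigr| \leq \tfrac{1}{m}.
\end{equation*}
The triangle inequality across the chain then gives
\begin{equation*}
\bigl|\textit{Precision}(C, \hat{R}) - \textit{Precision}(C, R)\bigr| \leq \sum_{t=1}^{\epsilon}\bigl|\textit{Precision}(C, S_{t-1}) - \textit{Precision}(C, S_t)\bigr| \leq \tfrac{\epsilon}{m},
\end{equation*}
which is the desired bound.

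The one subtlety worth flagging, and the place I would expect the main obstacle, is verifying that each intermediate object $S_t$ really is a legitimate partition of $M$ (so that Theorem 3 applies) and that the "correct one point at a time" procedure terminates at $R$ after exactly $\epsilon$ steps. This follows from Definition 6, but it is worth being explicit that moving $p_t$ to its $R$-cluster never disturbs the already-corrected points $p_1, \dots, p_{t-1}$, since those remain in their $R$-clusters throughout. One should also note that intermediate partitions may introduce empty clusters or newly non-empty clusters, which is harmless: Theorem 3 is stated for arbitrary partitions $S, \hat{S}$ of $M$ differing by a single point, and the cluster-mapping functions $f', g'$ are defined via $\arg\max$ over whichever clusters happen to be present. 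Once this bookkeeping is in place, the proof is essentially a one-line telescoping argument.
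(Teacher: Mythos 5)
Your proposal is correct and is essentially the same argument the paper gives: the paper's proof also sequentially corrects the $\epsilon$ erroneous points one at a time, invoking Theorem \ref{thm:precision1err} at each step and accumulating a change of at most $\frac{1}{m}$ per step. Your version merely makes the intermediate chain of partitions and the triangle-inequality telescoping explicit, which the paper leaves implicit.
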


\setcounter{theorem}{4}
\setcounter{corollary}{0}
\begin{corollary} \label{thm:recallerr}
$\lvert$\textit{Recall(C, R)} $-$ \textit{Recall(C, $\hat{R}$)}$\rvert$ $\leq \frac{\epsilon}{m}$
\end{corollary}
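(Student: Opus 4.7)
The plan is to use Definition~\ref{def:agtr} to decompose the passage from $\hat{R}$ to $R$ into a sequence of single-point corrections, and then iterate Theorem~\ref{thm:recall1err} along that sequence via the triangle inequality. Concretely, because $\hat{R}$ can be transformed into $R$ by correcting the cluster membership of $\epsilon$ data points, I would fix such a sequence of corrections and let $S_0 = \hat{R}, S_1, S_2, \ldots, S_\epsilon = R$ be the partitions of $M$ obtained after performing $0, 1, \ldots, \epsilon$ of those corrections in order. By construction, for every $t \in \{1, \ldots, \epsilon\}$ the partitions $S_{t-1}$ and $S_t$ differ in the cluster assignment of exactly one data point.

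Next I would apply Theorem~\ref{thm:recall1err} to each adjacent pair $(S_{t-1}, S_t)$, which immediately yields
\[
\bigl\lvert \textit{Recall}(C, S_{t-1}) - \textit{Recall}(C, S_t) \bigr\rvert \;\leq\; \tfrac{1}{m}
\]
for every $t$. Summing these inequalities via the triangle inequality gives
\[
\bigl\lvert \textit{Recall}(C, \hat{R}) - \textit{Recall}(C, R) \bigr\rvert \;\leq\; \sum_{t=1}^{\epsilon} \bigl\lvert \textit{Recall}(C, S_{t-1}) - \textit{Recall}(C, S_t) \bigr\rvert \;\leq\; \tfrac{\epsilon}{m},
\]
which is the desired conclusion.

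The argument is parallel in structure to the proof of Corollary~\ref{thm:precisionerr} for precision, since both rely only on the per-point stability bound and on Definition~\ref{def:agtr} guaranteeing an $\epsilon$-step correction sequence. The only genuinely delicate point, and what I would treat as the main obstacle, is justifying that Theorem~\ref{thm:recall1err} really does apply at every step: the intermediate partitions $S_t$ need not themselves be GTRs or AGTRs, but Theorem~\ref{thm:recall1err} was stated for an \emph{arbitrary} partition $S$ and its single-point modification $\hat{S}$, so this hypothesis is satisfied. Besides this observation, no additional machinery is needed, and the bound is tight in the sense that each of the $\epsilon$ applications of Theorem~\ref{thm:recall1err} can in principle contribute the full $\tfrac{1}{m}$.
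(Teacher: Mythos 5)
Your proposal is correct and follows essentially the same route as the paper's own proof: the paper likewise decomposes the passage between $R$ and $\hat{R}$ into $\epsilon$ sequential single-point cluster changes and invokes Theorem~\ref{thm:recall1err} at each step, noting the recall can change by at most $\pm\frac{1}{m}$ per step. Your explicit telescoping via the triangle inequality and your remark that Theorem~\ref{thm:recall1err} applies to arbitrary intermediate partitions simply make rigorous what the paper states more informally.
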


These relationships require knowledge of the exact value of $\epsilon$, which is again not possible to determine without knowing the ground truth reference clustering. Since the purpose of an AGTR is to be used when an adequate reference clustering is unavailable, this presents a problem. Our solution is to select some value $\hat{\epsilon}$ with the belief that $\hat{\epsilon} \geq \epsilon$. We show that if this belief is true, the bounds on precision, recall, and accuracy are valid.

\begin{theorem} \label{thm:precisionagtr}
If $\hat{\epsilon} \geq \epsilon$ then \textit{Precision(C, $\hat{R}$)} $-$ $\frac{\hat{\epsilon}}{m}$ $\leq$ \textit{Precision(C, D)}
\end{theorem}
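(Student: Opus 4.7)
The plan is to chain together two earlier results: \autoref{thm:precisiongtr}, which bounds \textit{Precision(C, R)} from above by \textit{Precision(C, D)} when $R$ is a true GTR, and \autoref{thm:precisionerr}, which bounds the difference $|\textit{Precision(C, R)} - \textit{Precision(C, $\hat R$)}|$ by $\epsilon/m$. Together these sandwich \textit{Precision(C, D)} below by a quantity expressible purely in terms of the observable $\hat R$, up to a slack that we can replace with the user-specified upper estimate $\hat\epsilon$.

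First I would unfold the absolute value in \autoref{thm:precisionerr} to extract the one-sided inequality
\begin{equation*}
\textit{Precision(C, $\hat R$)} - \frac{\epsilon}{m} \;\leq\; \textit{Precision(C, R)}.
\end{equation*}
Second, I would invoke \autoref{thm:precisiongtr} to replace the right-hand side with \textit{Precision(C, D)}, yielding
\begin{equation*}
\textit{Precision(C, $\hat R$)} - \frac{\epsilon}{m} \;\leq\; \textit{Precision(C, D)}.
\end{equation*}
Third, I would use the hypothesis $\hat\epsilon \geq \epsilon$ to observe that $\hat\epsilon/m \geq \epsilon/m$, so subtracting the larger quantity $\hat\epsilon/m$ on the left only weakens the inequality:
\begin{equation*}
\textit{Precision(C, $\hat R$)} - \frac{\hat\epsilon}{m} \;\leq\; \textit{Precision(C, $\hat R$)} - \frac{\epsilon}{m} \;\leq\; \textit{Precision(C, D)},
\end{equation*}
which is the desired bound.

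There is no genuine obstacle here; the conceptual work was already done in \autoref{thm:precisiongtr} (showing that refining a reference partition can only decrease precision) and in \autoref{thm:precision1err} (showing per-point Lipschitz behavior of precision, which was then lifted to $\epsilon$ points in \autoref{thm:precisionerr}). The only subtlety to flag explicitly is the monotonicity step that justifies replacing the unknown $\epsilon$ by its upper estimate $\hat\epsilon$, since this is what allows the bound to be computed in practice without access to the ground truth clustering $D$.
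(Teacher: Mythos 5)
Your proposal is correct and follows essentially the same route as the paper's own proof: unfold the absolute value in \autoref{thm:precisionerr} to get the one-sided bound, chain it with \autoref{thm:precisiongtr}, and use $\hat{\epsilon} \geq \epsilon$ to weaken the slack term. The monotonicity step you flag is indeed the only point requiring the hypothesis, and the paper handles it identically.
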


\begin{theorem} \label{thm:recallagtr}
If $\hat{\epsilon} \geq \epsilon$ then \textit{Recall(C, $\hat{R}$)} + $\frac{\hat{\epsilon}}{m}$ $\geq$ \textit{Recall(C, D)}
\end{theorem}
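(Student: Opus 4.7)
The plan is to chain together the results already established for a true GTR with the stability result that converts an AGTR into a near-GTR, and then absorb the inequality $\hat{\epsilon} \geq \epsilon$ at the very end. Specifically, the three ingredients I have available are: (i) Theorem \ref{thm:recallgtr}, which tells me $\textit{Recall}(C, R) \geq \textit{Recall}(C, D)$ whenever $R$ is an actual GTR of $D$; (ii) Corollary \ref{thm:recallerr}, which tells me $|\textit{Recall}(C, R) - \textit{Recall}(C, \hat{R})| \leq \epsilon/m$ whenever $\hat{R}$ is an AGTR obtained from $R$ by mislabeling $\epsilon$ points; and (iii) the hypothesis $\hat{\epsilon} \geq \epsilon$, which implies $\hat{\epsilon}/m \geq \epsilon/m$.

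First I would invoke Definition \ref{def:agtr} to fix the (unknown) underlying GTR $R$ of which $\hat{R}$ is an $\epsilon$-perturbation. Since $R$ is a bona fide GTR of $D$, Theorem \ref{thm:recallgtr} applies and gives $\textit{Recall}(C, D) \leq \textit{Recall}(C, R)$. Next I would apply one direction of the absolute-value inequality in Corollary \ref{thm:recallerr}, specifically $\textit{Recall}(C, R) - \textit{Recall}(C, \hat{R}) \leq \epsilon/m$, which rearranges to $\textit{Recall}(C, R) \leq \textit{Recall}(C, \hat{R}) + \epsilon/m$. Chaining these yields
\[
\textit{Recall}(C, D) \;\leq\; \textit{Recall}(C, R) \;\leq\; \textit{Recall}(C, \hat{R}) + \tfrac{\epsilon}{m} \;\leq\; \textit{Recall}(C, \hat{R}) + \tfrac{\hat{\epsilon}}{m},
\]
where the last step uses the hypothesis $\hat{\epsilon} \geq \epsilon$, finishing the theorem.

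There is no genuinely hard step left at this stage, because all of the analytic content has been front-loaded into Theorem \ref{thm:recallgtr} (the refinement structure forcing $\textit{Recall}$ to only increase when we coarsen from $R$ to $D$) and into Theorem \ref{thm:recall1err} (the per-point $1/m$ Lipschitz-type stability for recall under single-point relabelings, which is iterated to give Corollary \ref{thm:recallerr}). The one small subtlety I would explicitly call out is the direction of the one-sided bound pulled from Corollary \ref{thm:recallerr}: I only need the upper half of the symmetric inequality, namely $\textit{Recall}(C, R) \leq \textit{Recall}(C, \hat{R}) + \epsilon/m$, so there is no concern about the worst-case gap pointing the wrong way. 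The other mild obstacle is conceptual rather than technical: the true $\epsilon$ is unknown, and the argument only produces a \emph{valid} bound whenever the user's guess $\hat{\epsilon}$ dominates it, which is exactly why the hypothesis $\hat{\epsilon} \geq \epsilon$ is used in the final inequality rather than earlier in the chain.
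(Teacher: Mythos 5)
Your proof is correct and follows essentially the same route as the paper's: both chain Corollary \ref{thm:recallerr} (the one-sided rearrangement $\textit{Recall}(C,R) \leq \textit{Recall}(C,\hat{R}) + \epsilon/m$) with Theorem \ref{thm:recallgtr} and then absorb $\hat{\epsilon} \geq \epsilon$ at the end. No gaps; the explicit remark about which half of the absolute-value inequality is needed is a nice touch but matches what the paper implicitly does.
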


\begin{corollary}
\label{cor:accuracyagtr}
If $\hat{\epsilon} \geq \epsilon$ then \textit{Recall(C, $\hat{R}$)} + $\frac{\hat{\epsilon}}{m}$ $\geq$ \textit{Accuracy(C, D)}
\end{corollary}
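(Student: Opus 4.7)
My plan is to obtain this corollary by chaining together results already established in the paper, in direct analogy to how \autoref{cor:accuracygtr} was deduced from \autoref{thm:recallgtr}, but lifted to the AGTR setting with the $\hat{\epsilon}/m$ slack term borrowed from \autoref{thm:recallerr}. The key idea is that \autoref{cor:accuracygtr} already bounds $\text{Accuracy}(C, D)$ cleanly by $\text{Recall}(C, R)$ for the underlying (unobservable) GTR $R$; all that remains is to replace the unknown quantity $\text{Recall}(C, R)$ by the observable quantity $\text{Recall}(C, \hat{R})$ at the cost of a controlled error term.

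Concretely, I would proceed as follows. First, invoke \autoref{def:agtr} to assert the existence of a GTR $R$ that can be obtained from $\hat{R}$ by correcting exactly $\epsilon$ data points. Applying \autoref{cor:accuracygtr} to this $R$ gives
\begin{equation*}
\text{Accuracy}(C, D) \leq \text{Recall}(C, R).
\end{equation*}
Next, apply \autoref{thm:recallerr}, which guarantees $|\text{Recall}(C, R) - \text{Recall}(C, \hat{R})| \leq \epsilon/m$, and in particular $\text{Recall}(C, R) \leq \text{Recall}(C, \hat{R}) + \epsilon/m$. Finally, use the hypothesis $\hat{\epsilon} \geq \epsilon$ to weaken this to $\text{Recall}(C, R) \leq \text{Recall}(C, \hat{R}) + \hat{\epsilon}/m$, and chain the three inequalities to conclude
\begin{equation*}
\text{Accuracy}(C, D) \leq \text{Recall}(C, \hat{R}) + \frac{\hat{\epsilon}}{m},
\end{equation*}
which is the claim.

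There is no real mathematical obstacle here; the result is a routine composition of prior lemmas, structurally identical to how \autoref{thm:recallagtr} lifts \autoref{thm:recallgtr} via the $\hat{\epsilon}/m$ slack. The only mild care needed is verifying that the argmax-based mapping $g'$ implicit in $\text{Recall}(C, R)$ and $\text{Recall}(C, \hat{R})$ is used consistently with the conventions of \autoref{cor:accuracygtr} and \autoref{thm:recallerr}; since both prior results are stated under the same max-overlap mapping, this is immediate. As an alternative proof path, one could instead invoke \autoref{thm:recallagtr} directly to get $\text{Recall}(C, \hat{R}) + \hat{\epsilon}/m \geq \text{Recall}(C, D)$ and combine it with the observation that, whenever accuracy is defined (so $c = d$), the argmax choice of $g$ implies $\text{Recall}(C, D) \geq \text{Accuracy}(C, D)$; this observation is essentially the content already used to prove \autoref{cor:accuracygtr}, so the two approaches are equivalent.
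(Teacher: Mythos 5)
Your proposal is correct and matches the paper's own proof essentially step for step: both chain \autoref{thm:recallerr} (giving $\textit{Recall}(C,\hat{R}) + \epsilon/m \geq \textit{Recall}(C,R)$) with \autoref{cor:accuracygtr} (giving $\textit{Recall}(C,R) \geq \textit{Accuracy}(C,D)$) and the hypothesis $\hat{\epsilon} \geq \epsilon$. No gaps; nothing further to add.
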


They allow the bounds on the precision, recall, and accuracy of a clustering algorithm or a multi-class classifier to be computed without reference labels.

\subsection{Estimating Errors in an AGTR}
\label{sec:AGTRerror}

We emphasize that the evaluation metric bounds from Theorem \ref{thm:precisionagtr}, Theorem \ref{thm:recallagtr}, and Corollary \ref{cor:accuracyagtr} only hold if errors during AGTR construction are accounted for properly, \emph{i.e.} $\hat{\epsilon} \geq \epsilon$. Selecting a satisfactory value of $\hat{\epsilon}$ for an AGTR is a matter of epistemic uncertainty and is a topic of  future work. Determining the approximate error rate of a process used to construct an AGTR will likely require some guesswork, as a quality reference dataset is presumably unavailable. Domain experts should model their uncertainty about the AGTR construction method's error rate and choose a value of $\hat{\epsilon}$ that they believe exceeds the number of errors with very high confidence. In \autoref{sec:pehash}, we provide an example of how to evaluate the error rate of an AGTR in order to select a judicious value of $\hat{\epsilon}$.

\subsection{Properties of AGTRs and Resulting Bounds}
\label{sec:IdealAGTR}
Although we have proven that it is possible to compute bounds on precision, recall, and accuracy using an AGTR, we have not yet proposed any techniques for constructing an AGTR from a dataset. Just as there is no universal method which can be used to label a reference dataset, there is no singular approach which can be used for general AGTR construction. Instead, constructing an AGTR requires applying domain knowledge from a problem space to group data points with a high likelihood of sharing a reference label. Due to this domain knowledge requirement, each AGTR construction technique is specific to one kind of classification or clustering problem.

\subsubsection{Exploring the tightness or looseness of AGTR bounds}
\label{sec:trivial}
Some AGTR construction techniques will produce more useful evaluation metric bounds than others. Suppose a GTR $R$ constructed by simply assigning every data point to its own singleton cluster. We know that this method will always form a GTR with no errors because each singleton cluster must be the subset of some cluster in the ground truth reference clustering. However, when we use this GTR to compute the precision and recall of the predicted clustering, we obtain a precision lower bound of $\frac{c}{m}$ and a recall upper bound of $1$, where $c$ is the number of predicted clusters. These bounds are extremely loose and uninformative, and a GTR constructed in this manner is not useful for evaluation purposes.

We have found that the similarity in composition between an AGTR and the reference clustering strongly influences the tightness or looseness of evaluation metric bounds. Given a ground truth reference clustering $D$ and an AGTR $\hat{R}$, let $\delta$ be the minimum number of data points in $\hat{R}$ whose cluster membership must be changed in order to transform it into $D$. Using \autoref{thm:precision1err} and \autoref{thm:recall1err} we show that the difference between a metric bound (prior to accounting for $\hat{\epsilon}$) and the true value of that metric is no greater than $\frac{\delta}{m}$.

\setcounter{theorem}{3}
\setcounter{corollary}{1}
\begin{corollary} \label{thm:precisionerrbound}
$\lvert$\textit{Precision(C, D)} $-$ \textit{Precision(C, $\hat{R}$)}$\rvert$ $\leq \frac{\delta}{m}$
\end{corollary}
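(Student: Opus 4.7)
The plan is to reduce the statement to a telescoping application of \autoref{thm:precision1err}. By the definition of $\delta$, there exists a sequence of $\delta$ single data-point reassignments that transforms $\hat{R}$ into $D$. This gives a chain of partitions
\[
S_0 = \hat{R},\ S_1,\ S_2,\ \dots,\ S_\delta = D,
\]
of the same underlying set $M$, where each consecutive pair $S_{t-1}, S_t$ differs only in the cluster membership of a single data point. A small technical point to verify here is that the intermediate $S_t$ are genuine partitions of $M$ (possibly with an empty cluster appearing or disappearing when a point is moved into a fresh cluster or leaves a singleton); this is immediate if we allow empty clusters in the partition, or else it can be finessed by dropping empty blocks, which does not affect precision.

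Next, I would apply \autoref{thm:precision1err} to each consecutive pair in the chain, obtaining
\[
\bigl\lvert \textit{Precision}(C, S_{t-1}) - \textit{Precision}(C, S_t) \bigr\rvert \;\leq\; \tfrac{1}{m}
\]
for every $t \in \{1, \dots, \delta\}$. Then the triangle inequality collapses the chain:
\[
\bigl\lvert \textit{Precision}(C, \hat{R}) - \textit{Precision}(C, D) \bigr\rvert \;\leq\; \sum_{t=1}^{\delta} \bigl\lvert \textit{Precision}(C, S_{t-1}) - \textit{Precision}(C, S_t) \bigr\rvert \;\leq\; \tfrac{\delta}{m},
\]
which is exactly the claim.

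The main obstacle, such as it is, is not any hard inequality but rather the bookkeeping around the chain: one must argue that any two partitions of $M$ differing in $\delta$ point-assignments can indeed be connected by a length-$\delta$ sequence of single-point moves, and that \autoref{thm:precision1err} applies to each move even if one of the affected clusters becomes (or leaves) a singleton or an empty block. Both are straightforward once stated carefully, so the remainder of the proof is essentially a one-line invocation of the triangle inequality on top of \autoref{thm:precision1err}.
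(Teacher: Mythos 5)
Your proof is correct and follows essentially the same route as the paper: the paper likewise transforms $D$ into $\hat{R}$ by $\delta$ sequential single-point reassignments and invokes \autoref{thm:precision1err} at each step, which is exactly your telescoping chain plus the triangle inequality. Your explicit attention to the intermediate partitions and empty blocks is a careful touch the paper leaves implicit, but it does not change the argument.
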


\setcounter{theorem}{4}
\setcounter{corollary}{1}
\begin{corollary} \label{thm:recallerrbound}
$\lvert$\textit{Recall(C, D)} $-$ \textit{Recall(C, $\hat{R}$)}$\rvert$ $\leq \frac{\delta}{m}$
\end{corollary}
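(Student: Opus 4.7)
The plan is to prove Corollary 2 by a telescoping argument built on top of Theorem \ref{thm:recall1err}. The key observation is that $\delta$ is defined as the minimum number of single–point cluster reassignments needed to turn $\hat{R}$ into $D$, which immediately suggests a sequence of partitions interpolating between the two.

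First, I would fix a sequence of single–point reassignments realizing the minimum $\delta$, and use it to define a chain of partitions $\hat{R} = S^{(0)}, S^{(1)}, \ldots, S^{(\delta)} = D$ of $M$, where $S^{(t+1)}$ is obtained from $S^{(t)}$ by changing the cluster membership of exactly one data point. By construction, every consecutive pair $(S^{(t)}, S^{(t+1)})$ falls within the hypothesis of Theorem \ref{thm:recall1err}.

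Next, I would apply Theorem \ref{thm:recall1err} to each consecutive pair to obtain the $\delta$ inequalities
\[
\bigl\lvert \textit{Recall}(C, S^{(t)}) - \textit{Recall}(C, S^{(t+1)}) \bigr\rvert \leq \tfrac{1}{m}, \qquad 0 \leq t \leq \delta-1.
\]
A standard telescoping application of the triangle inequality then yields
\[
\bigl\lvert \textit{Recall}(C, \hat{R}) - \textit{Recall}(C, D) \bigr\rvert \leq \sum_{t=0}^{\delta-1} \bigl\lvert \textit{Recall}(C, S^{(t)}) - \textit{Recall}(C, S^{(t+1)}) \bigr\rvert \leq \tfrac{\delta}{m},
\]
which is exactly the desired bound.

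The only step that requires any care is the first one: verifying that the minimality of $\delta$ indeed guarantees the existence of such a single–point–reassignment chain of length $\delta$ (as opposed to, say, a chain that requires moving a data point to a temporary intermediate cluster). This is essentially bookkeeping — one can just take the $\delta$ points whose memberships differ between $\hat{R}$ and $D$ and reassign them one at a time directly to their $D$-clusters — so no real obstacle arises. The proof of Corollary 1 is then immediate by reusing the same argument with Theorem \ref{thm:precision1err} in place of Theorem \ref{thm:recall1err}.
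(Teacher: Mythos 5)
Your proposal is correct and follows essentially the same route as the paper: the paper's proof also interpolates between $D$ and $\hat{R}$ by sequentially reassigning the $\delta$ differing data points and invokes Theorem \ref{thm:recall1err} at each step. You merely make the telescoping triangle inequality and the existence of the single-point-reassignment chain explicit, which the paper leaves implicit.
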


\subsubsection{Properties of an ideal AGTR}
Because evaluation metric bounds can deviate by up to $\frac{\delta}{m}$ from the true metric values, AGTRs that have smaller values of $\frac{\delta}{m}$, \emph{i.e.}, ones that are as similar to the ground truth reference clustering as possible, are preferred. This allows us to identify the following properties that should be considered when designing an AGTR in order for it to produce tight evaluation metric bounds:

\emph{Low false positive rate.} An AGTR construction technique should group data points from different ground truth reference clusters as infrequently as possible. An increased rate of these false positives must be accounted for with a larger value of $\hat{\epsilon}$ to ensure that $\hat{\epsilon} \geq \epsilon$. This is undesirable, since a larger value of $\hat{\epsilon}$ results in looser evaluation metric bounds.

\emph{Acceptable false negative rate.} A method for constructing an AGTR should be effective at grouping together data points with the same ground truth reference label. An AGTR with too many ungrouped data points will have a large value of $\delta$, resulting in loose bounds, with precision tending to become very loose.

\emph{Scalable.} Datasets used for constructing an AGTR should be large enough to adequately represent the problem space. A technique for constructing an AGTR must have acceptable performance when applied to a large number of data points.

An AGTR with all three of these traits will produce tight evaluation metric bounds. However, in practice, we find that it is difficult to have a low false negative rate while at the same time avoiding false positives. We have frequently observed loose precision lower bounds due to false negatives during AGTR construction.

\section{Applications of Approximate Ground Truth Refinements} \label{sec:how_to_apply}

As long as an AGTR can be constructed from a dataset, that dataset can be used for partial evaluation of a clustering algorithm or multi-class classifier - even if the dataset does not have reference labels. This is notable because it allows larger, more representative datasets without reference labels to be used during the evaluation process. The ability to compute bounds on precision, recall, and accuracy without reference labels is valuable in its own right, but we have found two additional ways in which AGTRs can be used to great effect. First, the bounds from an AGTR can be used as a litmus test for detecting biased evaluation results produced using a substandard reference dataset. Second, AGTRs can be used for evaluating modifications to a clustering algorithm or multi-class clasifier.

\subsection{Testing Over-fit Evaluation Results}
\label{sec:testing-classifiers}

We have found that AGTRs can be used to detect misleading results produced from low quality reference datasets (\emph{e.g.} datasets that are small, under-representative, or have lower-confidence reference labels). The following five steps describe how to test for over-fit evaluation results:

\begin{enumerate}[label=\arabic{*}.]
\item Compute the precision, recall, and/or accuracy of a clustering algorithm or a multi-class classifier using a substandard reference dataset.

\item Obtain $C$ by applying the clustering algorithm or multi-class classifier to a large, diverse dataset.

\item Construct an AGTR $\hat{R}$ from the dataset in step 2. 

\item Select a value of $\hat{\epsilon}$ that is believed to be greater than the number of errors in $\hat{R}$.

\item Compute \textit{Precision(C, $\hat{R}$)} $-$ $\frac{\hat{\epsilon}}{m}$ and \textit{Recall(C, $\hat{R}$)} $+$ $\frac{\hat{\epsilon}}{m}$. Test whether all bounds hold for the evaluation results found during step 1. 
\end{enumerate}

Note that if testing a multi-class classifier, $C$ is obtained during step 2 by using the classifier to predict the class label for each data point in the dataset and then assigning all data points that share a predicted label to the same cluster. Subsection \ref{sec:IdealAGTR} describes the qualities an ideal AGTR should possess for best evaluation results. Refer to \autoref{sec:AGTRerror} for discussion about how to select an appropriate value of $\hat{\epsilon}$ during step 4. 

Because the dataset used to construct the AGTR during step 3 is larger and more diverse than the reference dataset from step 1, we assume that the AGTR dataset is a better exemplar of the overall problem space. Although the metric bounds found using the AGTR dataset do not necessarily hold for the reference dataset, it would still be irregular for any evaluation results from the reference dataset to violate them. Therefore, if \textit{Precision(C, $\hat{R}$)} $-$ $\frac{\hat{\epsilon}}{m}$ is greater than the precision computed using the reference dataset, or if \textit{Recall(C, $\hat{R}$)} + $\frac{\hat{\epsilon}}{m}$ is less than the recall or accuracy computed using the reference dataset, it is reasonable to conclude that the evaluation results found using the reference dataset are over-fit.

\subsection{Comparing Similar Models}
\label{sec:comparing-classifiers}

An important component of cluster and classifier evaluation is the ability to compare models against each other. Suppose we wish to determine which of two clustering algorithms has a higher precision, but we do not have access to a satisfactory reference dataset. We use the two models to predict clusterings $C_{1}$ and $C_{2}$ from an unlabeled dataset and we construct an AGTR $\hat{R}$ from the same dataset. The precision lower bounds of the two clustering algorithms are given by \textit{Precision($C_{1}$, $\hat{R}$)} $-$ $\frac{\hat{\epsilon}}{m}$ and \textit{Precision($C_{2}$, $\hat{R}$)} $-$ $\frac{\hat{\epsilon}}{m}$. Although we can apply \autoref{thm:precisionagtr} to show \textit{Precision($C_{1}$, $\hat{R}$)} $-$ $\frac{\hat{\epsilon}}{m}$ $\leq$ \textit{Precision($C_{1}$, D)} and \textit{Precision($C_{2}$, $\hat{R}$)} $-$ $\frac{\hat{\epsilon}}{m}$ $\leq$ \textit{Precision($C_{2}$, D)}, we cannot prove any relationship between \textit{Precision($C_{1}$, D)} and \textit{Precision($C_{2}$, D)}.

Unfortunately, evaluation metric bounds cannot be used to provably determine whether one clustering algorithm or multi-class classifier has a higher precision, recall, or accuracy than another. However, in specific cases higher evaluation metric bounds may indicate that one model has a higher performance than another. Three conditions must be met in order for this approach to be used. First, the clustering algorithms or classifiers being compared must be intrinsically similar, such as two different versions of the same classifier. Second, the problem space must have a large number of classes, such that it would be improbable for related data points to be grouped by random chance. Finally, one of the clustering algorithms or classifiers must be tested to ensure that changes in performance are strongly correlated to changes in evaluation metric bounds. The steps of the test are as follows:

\begin{enumerate}[label=\arabic{*}.]
\item Obtain $C$ by applying the clustering algorithm or multi-class classifier to a large, diverse dataset.

\item Construct an AGTR $\hat{R}$ from the dataset in step 1. 

\item Incrementally shuffle the cluster membership of each data point in $C$.

\item Compute \textit{Precision(C, $\hat{R}$)} $-$ $\frac{\hat{\epsilon}}{m}$ and \textit{Recall(C, $\hat{R}$)} $+$ $\frac{\hat{\epsilon}}{m}$ at regular intervals of the shuffle. 

\item Compute correlation between shuffle percentage and the evaluation metric bounds. Test that a strong negative correlation between the two exists.
\end{enumerate}

Step 3 is performed randomly sampling a data point in $C$ with no replacement, randomly selecting a cluster in $C$ weighted by the original distribution of cluster sizes, and then assigning the data point to that cluster. The process repeats $m$ times, where $m$ is the number of points in the dataset, after which the entire clustering has been shuffled. Because the dataset has high class diversity, the probability that a data point is randomly assigned to an incorrect cluster is far greater than the probability that it is randomly assigned the correct one. Therefore, shuffling $C$ with this strategy is very likely to produce predicted clusterings that are sequentially worse. Step 5 is performed by computing the Pearson correlation between the shuffle percentage and the evaluation metric bounds. If a very strong negative correlation exists between the two, we conclude that small modifications to a clustering algorithm or classifier that improve its performance will be reflected by a higher evaluation metric bounds and modifications that lower its performance will be reflected by lower bounds. We provide examples of using an AGTR to test evaluation results produced by low-quality datasets and to compare performance changes in similar models in subsections \ref{sec:avclass-test} and \ref{sec:comparing-avclass} respectively.

\section{Evaluating Malware Classifiers Using an AGTR}
\label{sec:ValidatingMalware}

A malware family is a collection of malicious files that are derived from a common source code. Classifying a malware sample into a known family provides insights about the behaviors it likely performs and can substantially assist triage and remediation efforts. Developing models that can classify or cluster malware by family is a vital research area \cite{Chakraborty}, but the task of obtaining family labels to train and/or evaluate these models is expensive and error prone, more so than most standard ML applications areas~\cite{Raff2020a}. This forces datasets to be either small and thus non-representative of the large and diverse malware ecosystem, or large, but noisily labeled or biased which again limits conclusions. Because of these issues, we believe that malware family classification is an ideal problem space for demonstrating the AGTR evaluation framework. In this section, we will quickly review the challenges in labeling malware and how it has caused a lack of quality datasets. Then, we will propose a method for constructing an AGTR from a dataset of malware samples.

\subsection{Malware Label and Dataset Challenges}
\label{sec:dataset-challenges}

Ideally a malware reference dataset would be constructed using \emph{manual labeling} to determine the malware family of each file. Manual analysis is not perfectly accurate, but the error rate is considered negligible enough that labels obtained via manual analysis are considered to have ground truth confidence~\cite{mohaisen2015}. A professional analyst can take a 10 hours or more to fully analyze a single file~\cite{mohaisen2013,10.1145/3290607.3313040}. This level of analysis is not always needed to determine the family of a malware sample, but it exemplifies the high human cost of manual labeling~\cite{perdisci}. We are not aware of any datasets in which the creators manually analyzed all malware samples in order to determine family labels. Instead, a few reference datasets use the family labels in \emph{open-source threat reports} published by reputable cybersecurity companies as a source of ground truth \cite{malgenome}. However, this limits both the size and contents of a dataset to the malware samples published in these reports. An approach that attempts to mitigate these scalability issues is \emph{cluster labeling}, in which the dataset is clustered and an exemplar from each cluster is manually labeled. This strategy is highly reliant on the precision of the algorithm used to cluster the reference dataset, which is often custom-made \cite{malsign,nappa}. Furthermore, the scalability of cluster labeling is still somewhat limited due to the requirement of manual analysis.

For these reasons, larger malware reference datasets tend to use \emph{antivirus labeling}, where an antivirus engine is used to label a corpus. This is the simplest method to implement at scale, but at the cost of label quality~\cite{zhu}. Antivirus labels are frequently incomplete, inconsistent, or incorrect \cite{botacin,mohaisen2014}. Antivirus signatures do not always contain family information~\cite{mohaisen2015}, and different antivirus engines disagree on the names of malware families \cite{avclass}. Labels from an antivirus engine can take almost a year to stabilize \cite{zhu}, creating a necessary lag time between data occurrence and label inference. These issues can be partially mitigated by using \emph{antivirus majority voting}, but aggregating the results of multiple antivirus engines produces unlabeled files due to lack of consensus. This then biases the final dataset to only the ``easy'' samples that are the least interesting and already known by current tools \cite{li}.

The aforementioned labeling issues obstruct creation of large malware datasets with high-confidence labels. The largest datasets used in malware classifier and clustering research range from a hundred thousand \cite{malsign,mohaisen2015} up to one million samples \cite{huang}, but are private corpora held by corporations that can afford the construction cost and do not want to give away a competitive advantage\footnote{There are also legal concerns for sharing benign applications, but our discussion is focused solely on malware.}. Since the data is private, the validation of the labeling can not be replicated or investigated, and in most cases, the number of families is not fully specified \cite{malsign,mohaisen2015}. 
The majority of publicly available datasets that have been used are less than 12,000 samples in size \cite{malheur,rieck,drebin,arp,nappa,kaggle}. Of these MalGenome is the only corpus with ground truth, but also the smallest with only 49 families and 1,260 files\cite{malgenome,zhou}. The small size and low diversity of these corpora make it difficult to form generalizable conclusions about the quality of a malware clustering algorithm or classifier. Additional statistics about notable malware  datasets are listed in Tables \ref{tab:privatedatasets} and \ref{tab:malware-datasets}.

\begin{table}[!t]
\centering
\caption{Notable Private Malware Reference Datasets}
\label{tab:privatedatasets}
\resizebox{\columnwidth}{!}{%
\begin{tabular}{@{}lrrlll@{}}
\toprule
Name & Samples & Families & Platform & Collection Period & Labeling Method\\ \midrule
    Malsign \cite{malsign} & 142,513 & Unknown & Windows & 2012 - 2014 & Cluster labeling\\ 
    MaLabel \cite{mohaisen2015} & 115,157 & $\ge$ 80 & Windows & Apr. 2015 or earlier & AV Majority Vote\\
    MtNet \cite{huang} & 1,300,000 & 98 & Windows & Jun. 2016 or earlier & Hybrid\\ \bottomrule
\end{tabular}
}
\end{table}

\begin{table}[!t]
\centering
\caption{Notable Public Malware Reference Datasets}
\label{tab:malware-datasets}
\resizebox{\columnwidth}{!}{%
\begin{tabular}{@{}lrrlll@{}}
\toprule
Name & Samples & Families & Platform & Collection Period & Labeling Method\\ \midrule
    Malheur \cite{malheur} & 3,133 & 24 & Windows & 2006 - 2009 & AV Majority Vote\\
    MalGenome \cite{malgenome} & 1,260 & 49 & Android & Aug. 2010 - Oct. 2011 & Threat Reports\\
    Drebin \cite{drebin} & 5,560 & 179 & Android & Aug. 2010 - Oct. 2012 & AV Majority Vote\\
    VX Heaven \cite{vxheaven} & 271,092 & 137 & Windows & 2012 or earlier & Single AV\\
    Malicia \cite{malicia} & 11,363 & 55 & Windows & Mar. 2012 - Mar. 2013 & Cluster Labeling\\
    AMD \cite{AMD} & 24,553 & 71 & Android & 2010 - 2016 & Cluster Labeling\\
    Kaggle \cite{kaggle} & 10,868 & 9 & Windows & Feb. 2015 or earlier & Susp. Single AV\\
    MalDozer \cite{maldozer} & 20,089 & 32 & Android & Mar. 2018 or earlier & Susp. Single AV\\
    EMBER2018 \cite{ember} & 485,000 & 3,226 & Windows & 2018 or earlier & AVClass \\
    \bottomrule
\end{tabular}
}
\end{table}

\label{sec:def_ref} 

During the course of our survey, we identified three traits of malware reference datasets that negatively impact evaluation:

 \emph{1) Reference labels without ground truth confidence:} Because producing ground truth reference labels for a large corpus of malware samples is infeasible, it is common practice to use malware reference labels without ground truth confidence \cite{perdisci}. Many prior malware classifiers have been evaluated using non-ground truth reference labels without the quality of those labels having been assessed \cite{avclass}, which may result in overly-optimistic or misleading results \cite{li}.

\emph{2) Insufficient size or diversity:} Due to the enormous number of families in existence, malware reference datasets must be both large and diverse in order to be representative of the malware ecosystem. Using a reference dataset with a small number or imbalanced distribution of families lowers the significance of evaluation results \cite{li}. 

\emph{3) Outdated malware samples:} The ecosystem of malware is constantly changing as categories of malware, malware families, and other tradecraft rise to prominence or fall out of favor. Evaluating a clustering algorithm or multi-class classifier using a dataset of outdated malware may produce results that do not translate to present day malware. All of the discussed datasets contain malware samples from 2015 and earlier, failing to represent the last half decade of malware development.

All of the datasets in our survey possess one or more of these undesirable characteristics, causing all evaluation results obtained using them to be dubious. In fact, due to the continuously evolving environment of malware development and the time required to manually label malware (as well as for antivirus labels to stabilize), it is impossible for a reference dataset to be both up-to-date and have ground truth reference labels \cite{kantchelian}. Therefore, the presence of at least one of these negative traits in a malware reference dataset is an unfortunate necessity.

The dearth of satisfactory malware reference datasets is primarily due to the difficulty of determining the the family name for each malware sample. However, identification of related malware samples is a well-researched topic and it can be performed efficiently as well as with high fidelity. In short, \textit{although it is difficult to label malware samples, it is easy to group them}. By identifying a subset of the relationships in a large corpus of malware samples, information about the performance of a malware classifier or clustering algorithm can be obtained without knowing any reference labels. It is this concept which forms the foundation of the AGTR  framework.

\subsection{Constructing a Malware Dataset AGTR}
\label{sec:pehash}

In this section, we discuss a method for constructing an AGTR from a dataset of malware samples. Because our method is automatic and scalable, the resulting AGTR can be orders of magnitude larger than a ground truth malware reference dataset and can include modern malware samples. Our method for constructing an AGTR from a malware dataset is based on peHash, a metadata hash for files in the Portable Executable (PE) format. Files in the PE format are executable files that can run on the Windows operating system, such as .exe, .dll, and .sys files. peHash was designed for identifying polymorphic malware samples within the same family as well as nearly identical malware samples. The hash digest is computed using metadata from the PE file header, PE optional header, and each PE section header \cite{pehash}. Two malware samples with identical values for all of the chosen metadata features have identical peHash digests. Due to the number of metadata features used in the hash and the large range of possible values that these features can have, the odds that two unrelated malware samples share a peHash digest is minuscule.

Our proposed method for constructing an AGTR from a Windows malware dataset requires computing the peHash digest of each malware sample. Then, all malware samples that share a peHash digest are assigned to the same cluster. If the peHash of a malware sample cannot be computed, such as due to malformed PE headers, it is assigned to a singleton cluster. Using a hash table to tabulate clusters allows an AGTR to be built very efficiently, requiring only $O(m)$ memory usage and $O(m)$ run time complexity, where $m$ is the number of malware samples in the dataset.

\citet{pehash} evaluated the false positive rate of peHash using 184,538 malware samples from the mwcollect Alliance dataset and 90,105 malware samples in a dataset provided by Arbor Networks. All malware samples were labeled using the ClamAV antivirus engine \cite{clamav}. The peHash of each malware sample in both datasets was calculated, resulting in 10,937 clusters for the mwcollect Alliance dataset and 21,343 clusters for the Arbor Networks dataset. Of these clusters, 282 and 322 had conflicting antivirus labels respectively. However, manual analysis showed that none of the clusters with conflicting antivirus labels contained unrelated malware samples. The evaluation method Wicherski used
does not rule out the possibility of false positives. However, it is evident that the false positive rate of peHash is extremely low. 
Based on Wicherski's evaluation and our own additional assessment, we suggest choosing an $\hat{\epsilon}$ of approximately one percent the total dataset size when using a peHash AGTR. We believe that this value should far exceed the true number of errors $\epsilon$ in the AGTR.

A major consideration in the selection of peHash as our proposed AGTR construction method is its prevalent industry use. peHash is widely regarded to have an extremely low false positive rate. Furthermore, due to the adversarial nature of the malware ecosystem, \citet{pehash} has already analyzed peHash's vulnerabilities and its widespread usage in industry means practitioners are aware of the real-world occurrence of attacks against it. These factors allow us to be very confident in our assessment of peHash's error rate. It was for these reasons that we elected to use peHash rather than design a custom AGTR construction technique. Developing new
methods for constructing AGTRs is a target of future work that may yield tighter evaluation metric bounds.

\section{Case Study: Applying the AGTR Evaluation Framework to AVClass} \label{sec:avclass_agtr_eval}

At this point, we have established the AGTR evaluation framework, discussed how malware classifier evaluation can benefit from it, and introduced a method for constructing an AGTR from a dataset of Windows malware using peHash. We will now apply the AGTR evaluation framework with the approaches described in subsections \ref{sec:testing-classifiers} and \ref{sec:comparing-classifiers} to the malware labeling tool AVclass \cite{avclass}. We hope that this section provides a template for utilizing the AGTR evaluation framework in other clustering and classification problem spaces.

When provided an antivirus scan report for a malware sample, AVClass attempts to aggregate the many antivirus signatures in the report into a single family label. AVClass is open source, simple to use, and does not require the malware sample to obtain a label, making it a popular choice as a malware classifier since its release in 2016. We provide new evidence of over-fitting in the original AVClass evaluation results due to the use of poor reference data. We also demonstrate the ability to compare modified versions of classifiers using an AGTR by making minor modifications to AVClass and assessing their benefits or drawbacks. Evaluating such nuanced modifications was not previously tenable due to the lack of large reference datasets. The ability to compare the impact of model adjustments immediately, that are otherwise hard to detect, is of significant value in this domain, as production changes usually require months to obtain customer feedback or through ``phantom'' deployments (i.e., a new model is deployed alongside a previous model, but the new results are recorded for evaluation and comparison). 

\subsection{Testing AVClass Results Using an AGTR}
\label{sec:avclass-test}

Sebastian \emph{et al.} \cite{avclass} evaluated AVClass using five malware reference datasets. Because security vendors frequently refer to malware families by different names, the family names used by AVClass do not match those used by the reference datasets. Therefore, although AVClass is a classifier, Sebastian \emph{et al.} could not compute its accuracy and chose to use precision and recall instead.

\begin{table}[h]
\centering
\caption{AVClass Precision and Recall \cite{avclass}}
\label{tab:avclassmetrics}
\begin{tabular}{lcc}
\toprule
Dataset Name & Precision & Recall \\ \midrule
    Drebin & 0.954 & 0.884 \\
    Malicia & 0.949 & 0.680 \\
    Malsign & 0.904 & 0.907 \\
    MalGenome* & 0.879 & 0.933 \\
    Malheur & 0.904 & 0.983 \\ \bottomrule
\end{tabular}
\end{table}

Precision and recall scores for the default version of AVClass are shown in Table \ref{tab:avclassmetrics}. The row entitled MalGenome* is a modified version of the MalGenome dataset where labels for six variants of the DroidKungFu family are corrected. We call attention to the high variation in evaluation results - the precision of AVClass ranges from 0.879 to 0.954 and its recall ranges from 0.680 to 0.983. It is clear that due to these inconsistencies, the evaluation results for AVClass are already suspect. To confirm this, we test the evaluation results of AVClass using the method described in \autoref{sec:testing-classifiers}.

To construct an AGTR we use a portion of the VirusShare dataset \cite{virusshare}. The full VirusShare corpus contains 40,620,999  unlabeled malware samples dated between June 2012 and the time of writing. The VirusShare dataset is broken into chunks and new chunks are added to the dataset regularly. We were provided antivirus scan reports for chunks 0-7, which consist of 1,048,567 malware samples \cite{seymour}. These scans were collected between December 2015 and May 2016 by querying the VirusTotal API \cite{virustotal}. We ran AVClass under default settings to obtain predicted family labels from each scan report. We produced a predicted clustering $C$ by assigning all malware samples with the same AVClass label to the same cluster. Malware samples for which no label could be determined were assigned to singleton clusters. Next, we created a peHash AGTR $\hat{R}$ from VirusShare chunks 0-7. Following our recommendation in \autoref{sec:pehash}, we choose $\hat{e} =$ 10,000 for the AGTR, which allows for an error rate of up to approximately one percent during the AGTR construction process - an error rate which should comfortably account for the very low number of errors we expect to be present in the peHash AGTR.

Using the peHash AGTR, we found that \textit{Precision(C, $\hat{R}$)} $- \frac{\hat{\epsilon}}{m} = 0.229$ and \textit{Recall(C, $\hat{R}$)} $+ \frac{\hat{\epsilon}}{m} = 0.895$. As a result of our analysis, we conclude that AVClass has an accuracy no greater than 0.895. The precision lower bound of 0.229 seems to be very loose considering that the smallest precision in Table \ref{tab:avclassmetrics} is 0.879. We attribute this to the moderate false negative rate of peHash; an AGTR construction technique with a lower false negative rate should yield a tighter bound. The similarity between the recall upper bound and the reported recall results shows that, although our peHash AGTR could be improved, the bounds are non-trivial. Designing improved methods for constructing AGTRs from malware datasets is an issue for future work. The Malsign, MalGenome, and Malheur datasets in Table \ref{tab:avclassmetrics} all have recall values exceeding the upper bound found using the peHash AGTR; the values for MalGenome and Malheur especially so. Although the exact number of families in VirusShare chunks 0-7 is unknown, we estimate that it contains samples from thousands of distinct malware families. Because VirusShare chunks 0-7 is much larger (exceeding 1 million malware samples) and more diverse than the Malsign, MalGenome, and Malheur datasets, we believe that evaluation results produced using those datasets are over-fit, likely due to the labeling difficulties we discussed in \autoref{sec:def_ref}.

\subsection{Comparing Modified Versions of AVClass}
\label{sec:comparing-avclass}

In this section, we show that a peHash AGTR can be used to determine whether modifications to AVClass make a positive or negative impact on performance. In order to compare clustering algorithms or classifiers using an AGTR, they must meet the three conditions listed in \autoref{sec:comparing-classifiers}. Because we are comparing AVClass to slightly modified versions of itself, the classifiers are similar enough that the first condition is met. Since we estimate that VirusShare chunks 0-7 contain thousands of malware families, it would be extremely rare to guess the correct family by random chance, which satisfies the second condition. For the third condition, we must determine if changes in classifier performance are strongly correlated with the evaluation metric bounds. To perform this check we use the same predicted clustering $C$ and AGTR $\hat{R}$ from \autoref{sec:avclass-test}. Next, we incrementally shuffle $C$ and compute the precision and recall bounds each time that an additional one percent of the data points have been shuffled.

\begin{figure}[h]
    \centering
    \adjustbox{max width=\columnwidth}{%
        \includegraphics{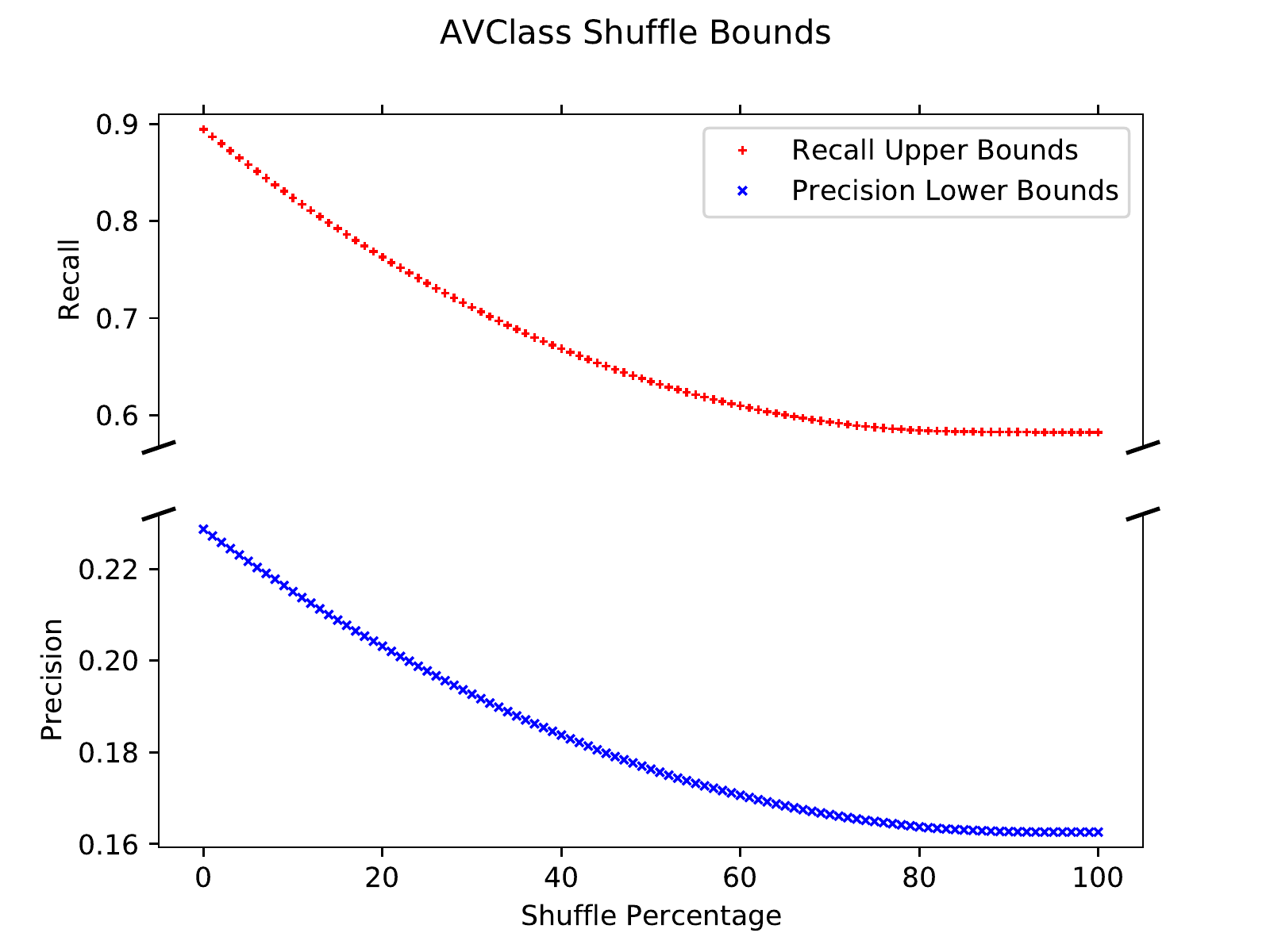}
    }
    \caption{Precision and recall bounds of AVClass with respect to shuffle percentage. The x-axis of each figure shows the percentage of data points whose cluster membership has been shuffled. The y-axis of each figure shows the value of the metric bound. As the shuffle percentage increases, our bounds adjust monotonically and at a near linear rate, slowing only after 80\% corruption.}
    \label{fig:Comparing}
\end{figure}

\autoref{fig:Comparing} shows how the bounds change as the data points are shuffled. It is evident that both bounds worsen predictably and monotonically as the data points are shuffled. 
The precison and recall bounds have a correlation of -0.956 and -0.940 respectively with the percentage of labels shuffled, each with a p-value $\leq 10^{-47}$.
Since a higher shuffle percentage indicates a worse clustering, there is likely a strong connection between the bounds and the true metric values. With all three required conditions having been met, we conclude that it is valid to compare modified versions of AVClass using the AGTR evaluation framework. Next, we compare modified versions of AVClass to the original tool. The purpose of this exercise is to demonstrate that an AGTR can be used to quantify the relative benefits and trade-offs of each of these changes to AVClass in the absence of reference data.

\subsubsection{Comparing Alias Resolution Methods in AVClass}

As we mentioned in Section \ref{sec:classifier-metrics}, it is common for different antivirus engines to refer to the same family of malware by different names. We call two names for the same malware family \emph{aliases} of each other. One of the steps that AVClass performs while aggregating antivirus signatures is resolution of family aliases \cite{avclass}. If aliases are not resolved properly, AVClass could produce erroneous labels. By default, AVClass uses a manually generated list of known aliases. AVClass also has a setting for generating a family alias map based on families that have a high co-occurrence percentage within a corpus of antivirus scan results. Generation of the family alias map is controlled by the parameters $n_{alias}$, which is the minimum number of malware samples two tokens must appear in together, and $T_{alias}$, which is the minimum co-occurrence percentage.

To investigate how alias replacement affects label quality, we provided AVClass with three different family alias maps and use it to label VirusShare chunks 0-7. The first map is the manually-produced one packaged within AVClass which is used by default. We generated the second map using the recommended parameter values $n_{alias} =$ 20 and $T_{alias} =$ 0.94, which were chosen empirically by Sebastian \emph{et al} \cite{avclass}. The third map was generated using the stricter parameter values $n_{alias} =$ 100 and $T_{alias} =$ 0.98 listed in the AVClass documentation.

\begin{table}[h]
\centering
\renewcommand{\arraystretch}{1.4}
\caption{Alias Resolution Bounds}
\label{tab:avclass_alias_eval}
\adjustbox{max width=\columnwidth}{%
\begin{tabular}{@{}lccc@{}}
\toprule
Alias Preparation    & Manual       & Recommended       & Strict \\ \midrule
Precision Lower Bound & 0.229 & 0.230 & 0.233 \\
Recall Upper Bound & 0.895 & 0.897 & 0.894 \\ \bottomrule
\end{tabular}
}
\end{table}

Table \ref{tab:avclass_alias_eval} shows the precision lower bound and recall upper bound for AVClass using the three family alias mappings. Generating a family alias map using the recommended parameters yields both higher precision and recall bounds than the default one. Generating a map using the stricter parameters results in the highest precision bound, but the lowest recall bound. An AVClass user can reasonably conclude that generating an alias mapping using the recommended settings is the best of the three options if high recall is desirable and that the strict settings are best if higher precision is required.

\subsubsection{Adding a Threshold to AVClass' Plurality Voting}

For the next modification, we added a plurality threshold to AVClass. By default, AVClass determines the label of a malware sample by selecting the plurality family proposed by the antivirus engines in a scan report \cite{avclass}. Rather than using simple plurality voting to determine the label, we modified AVClass to require that the number of votes for the plurality family exceeds the number of votes for any other family by a given threshold. For example, if the plurality threshold is two, the plurality family must receive at least two more votes than any other family. If no family meets this condition, AVClass outputs no label for that sample and it is assigned to a singleton cluster. We used the modified version of AVClass to label VirusShare chunks 0-7 with plurality thresholds between 0 and 5.

\begin{table}[h]
\caption{Plurality Threshold Bounds}
\label{tab:avclass_plurality_threshold_eval}
\centering
\adjustbox{max width=\columnwidth}{%
\begin{tabular}{@{}ccc@{}}
\toprule
Threshold & Precision Lower Bound & Recall Upper Bound \\ \midrule
0 & 0.229     & 0.895  \\
1 & 0.276     & 0.881  \\
2 & 0.332     & 0.860  \\
3 & 0.442     & 0.829  \\
4 & 0.511     & 0.803  \\
5 & 0.565     & 0.780  \\ \bottomrule
\end{tabular}
}
\end{table}

\autoref{tab:avclass_plurality_threshold_eval} displays the precision and recall bounds of AVClass using different plurality thresholds. Note that a plurality threshold of zero is equivalent to the default version of AVClass. As the plurality threshold is raised, the precision lower bound significantly increases, indicating that higher thresholds reduce the number of false positives. However, raising the plurality threshold creates a trade-off, as it causes the recall (and hence accuracy) upper bounds to decrease to a lesser degree. This decrease in recall is largely due to the growing number of unlabeled malware samples contributing to the false negative rate. Thresholds above three may be useful for classifiers that require a very high precision. A threshold of one or two may offer a higher precision than the default version of AVClass without sacrificing a significant amount of recall. Since different applications of malware classification may require either a low false positive rate or a low false negative rate, our findings indicate how designers of malware classifiers can adopt a suitable voting strategy.

\subsubsection{Removing Heuristic Antivirus Signatures in AVClass Voting}

When normalizing an antivirus signature, AVClass treats each token within the signature independently. However, we believe that incorporating contextual information from each token could improve AVClass' labeling decisions. A simple example of this is using context from tokens that indicate that the antivirus signature is a ``heuristic.'' We believe that heuristic signatures are more likely to include inaccurate family information. To test this, we have identified eight tokens that indicate that an antivirus signature is a heuristic: $\mathit{gen, generic, heur, heuristic, eldorado, variant, behaveslike}$, and $\mathit{lookslike}$. We modified AVClass to ignore any antivirus signatures which contain one or more of these tokens during voting.

\begin{table}[h]
\centering
\caption{Heuristic Signature Removal Bounds}
\label{tab:heuristic_eval}
\adjustbox{max width=\columnwidth}{%
\begin{tabular}{@{}lcc@{}}
\toprule
 & Default       & Heuristic Removal       \\ \midrule
Precision Lower Bound & 0.229 & 0.250  \\
Recall Upper Bound & 0.895 & 0.889 \\
\bottomrule
\end{tabular}
}
\end{table}

\autoref{tab:heuristic_eval} shows the evaluation metric bounds for the default version of AVClass and the modified version of AVClass where heuristic antivirus signatures are not counted towards the plurality vote. Simply ignoring common heuristic antivirus signatures substantially raises the precision bound of AVClass from 0.229 to 0.250. This comes at the cost of a minor increase in false negatives, as indicated by the slight drop in the recall bound. This confirms our suspicions that heuristic antivirus signatures often contain inaccurate family information. A more sophisticated method for handling heuristic    signatures could offer even further improvements to AVClass.

\section{Discussion and Conclusion}
\label{sec:conclusion}

We conclude by providing further exposition as to how we envision AGTRs being used, as well as their constraints. To be clear, our proposed AGTR framework does not alleviate the need for labeled data nor does it avoid all possible biases. Indeed, by constructing a mechanism that produces AGTRs, assuming it does not produce a complete graph, there must necessarily be some form of bias that exists via the selection of the AGTR production itself. Not only is labeled data needed to produce the initial model being evaluated, but labeled data is necessary to produce the estimate of the error rate $\hat{\epsilon}$ of the AGTR construction. These are important nuances to keep in mind during usage. 

Our belief is that there are many domains where constructing an AGTR with confidently small $\hat{\epsilon}$ is possible. The  case study which we perform in  \autoref{sec:ValidatingMalware} describes the utility AGTRs in the field of malware analysis, as it is the area of our expertise. We see potential application areas in chemistry, physics, optics, and other fields where obtaining experimental ground truth results is highly expensive and time consuming, but sophisticated simulations may be able to automatically group different scenarios together with high fidelity. That is, the simulation's result may not be an exactly correct prediction, but scenarios are congregated into groups in which all elements have the same fundamental nature with high probability. This is extrapolation on our part, since we do not have the requisite background to delve deeply into these other areas, but it captures our primary hypothesis: many domains have the requisite domain knowledge that could produce viable AGTRs.

Once an AGTR is produced, it provides a valuable means of improving bench-marking, confidence in results and testing of nuanced changes. Designing good benchmarks is of critical importance and an AGTR can be an effective litmus test for any benchmark creation. Our framework allows one to cheaply use a much larger set of unlabeled data to determine if the benchmark itself has become the subject of over-fitting, by performing the test described in \autoref{sec:testing-classifiers}. We later provide an example of this with the AVClass tool in \autoref{sec:avclass-test}, where our bounds indicate AVClass's evaluation is likely over-fit because their empirical recall is higher than the recall upper bound we obtain using an AGTR. This does not necessarily mean the results are actually over-fit, because the datasets evaluated are not the same (in AVClass' cases, not available to us). However, this finding does provide a strong indicator that something is amiss and it signals a need to more thoroughly and rigorously investigate the AVClass evaluation results.

The utility of AGTRs to perform benchmarking is simultaneously macro and micro in nature. The prior paragraph has described the macro nature, in that we can perform litmus testing against a created benchmark to detect over-fitting against a larger unlabeled corpus. The next level down that one would naturally wish to pursue is comparing two different models using an AGTR, but we must caution the reader prior to doing such a comparison. When two models are being evaluated and are of different mechanical natures, they will naturally correlate to different degrees with the biases of the AGTR construction itself. This means one method could appear to have better predicted bounds by our AGTR by only being successful on the data for which the AGTR itself is able to operate, while potentially being errant on all other data and would not be realized. Any other more reasonable model, that simply lacks the bias to the AGTR in use, would then appear worse when it may in fact be superior. It is for this reason that we describe the strict conditions that must be met prior to comparing two models in \autoref{sec:comparing-classifiers}. The models being compared must be very fundamentally similar, such as changing small parameters of a single approach. This is reasonable because the underlying mechanism of prediction remains the same and allows us to better quantify the impacts of minute changes that would be difficult to estimate on small static corpora. This assumes that the parameters themselves do not have an out-sized impact on the fundamental nature, which we believe is reasonable, but should be made explicit. 

This highlights another limitation of our approach, which is the number of epistemic uncertainties involved in its use. It will depend on the beliefs and confidence of the domain experts that the models being compared are ``similar enough'' that using an AGTR to compare changes is valid and that the error rate bound $\hat{\epsilon}$ is sufficient. While more concrete answers to these are desirable, we recognize them as important items of future work. As we have developed the AGTR approach thus far, it provides a powerful means of better leveraging unlabelled data in the evaluation of a ML model, which we argue is additive to the tools in use today.

The AGTR evaluation framework allows bounds on precision, recall, and accuracy to be computed without labeled reference data, which becomes all the more important as datasets become too large to manually validate. In addition, we have designed a litmus test that uses AGTRs to identify over-fit evaluation results produced by low-quality reference datasets. We show that AGTRs can be used to evaluate the impact of changes made to a model. We identify malware family classification as a field where the AGTR validation framework provides value, provide an implementation for constructing an AGTR using peHash, and apply the AGTR evaluation framework to the AVClass malware labeler. There is no shortage of other problem spaces with inadequate reference datasets and we believe that this work can be used to improve the evaluation process for them as well.

\bibliographystyle{IEEEtranN}
\bibliography{sample-base}

\begin{thebibliography}{49}
\providecommand{\natexlab}[1]{#1}
\providecommand{\url}[1]{#1}
\csname url@samestyle\endcsname
\providecommand{\newblock}{\relax}
\providecommand{\bibinfo}[2]{#2}
\providecommand{\BIBentrySTDinterwordspacing}{\spaceskip=0pt\relax}
\providecommand{\BIBentryALTinterwordstretchfactor}{4}
\providecommand{\BIBentryALTinterwordspacing}{\spaceskip=\fontdimen2\font plus
\BIBentryALTinterwordstretchfactor\fontdimen3\font minus
  \fontdimen4\font\relax}
\providecommand{\BIBforeignlanguage}[2]{{%
\expandafter\ifx\csname l@#1\endcsname\relax
\typeout{** WARNING: IEEEtranN.bst: No hyphenation pattern has been}%
\typeout{** loaded for the language `#1'. Using the pattern for}%
\typeout{** the default language instead.}%
\else
\language=\csname l@#1\endcsname
\fi
#2}}
\providecommand{\BIBdecl}{\relax}
\BIBdecl

\bibitem[Crowston(2012)]{mechanicalturk}
K.~Crowston, ``Amazon mechanical turk: A research tool for organizations and
  information systems scholars,'' in \emph{Shaping the Future of ICT Research.
  Methods and Approaches}, A.~Bhattacherjee and B.~Fitzgerald, Eds., 2012, pp.
  210--221.

\bibitem[Sebasti{\'a}n et~al.(2016)Sebasti{\'a}n, Rivera, Kotzias, and
  Caballero]{avclass}
M.~Sebasti{\'a}n, R.~Rivera, P.~Kotzias, and J.~Caballero, ``Avclass: A tool
  for massive malware labeling,'' in \emph{Research in Attacks, Intrusions, and
  Defenses}, F.~Monrose, M.~Dacier, G.~Blanc, and J.~Garcia-Alfaro, Eds., Cham,
  2016, pp. 230--253.

\bibitem[Bouthillier et~al.(2021)Bouthillier, Delaunay, Bronzi, Trofimov,
  Nichyporuk, Szeto, Sepah, Raff, Madan, Voleti, Kahou, Michalski, Serdyuk,
  Arbel, Pal, Varoquaux, and Vincent]{Bouthillier2021}
X.~Bouthillier, P.~Delaunay, M.~Bronzi, A.~Trofimov, B.~Nichyporuk, J.~Szeto,
  N.~Sepah, E.~Raff, K.~Madan, V.~Voleti, S.~E. Kahou, V.~Michalski,
  D.~Serdyuk, T.~Arbel, C.~Pal, G.~Varoquaux, and P.~Vincent, ``{Accounting for
  Variance in Machine Learning Benchmarks},'' in \emph{Machine Learning and
  Systems (MLSys)}, 2021.

\bibitem[Marie et~al.(2021)Marie, Fujita, and
  Rubino]{marie-etal-2021-scientific}
B.~Marie, A.~Fujita, and R.~Rubino, ``{Scientific Credibility of Machine
  Translation Research: A Meta-Evaluation of 769 Papers},'' in
  \emph{ACL}.\hskip 1em plus 0.5em minus 0.4em\relax Online: Association for
  Computational Linguistics, aug 2021, pp. 7297--7306.

\bibitem[Engstrom et~al.(2020)Engstrom, Ilyas, Santurkar, Tsipras, Steinhardt,
  and Madry]{pmlr-v119-engstrom20a}
L.~Engstrom, A.~Ilyas, S.~Santurkar, D.~Tsipras, J.~Steinhardt, and A.~Madry,
  ``{Identifying Statistical Bias in Dataset Replication},'' in
  \emph{Proceedings of the 37th International Conference on Machine Learning},
  vol. 119, Virtual, 2020, pp. 2922--2932.

\bibitem[Barz and Denzler(2020)]{Barz2019}
B.~Barz and J.~Denzler, ``{Do We Train on Test Data? Purging CIFAR of
  Near-Duplicates},'' \emph{Journal of Imaging}, vol.~6, no.~6, p.~41, jun
  2020.

\bibitem[Musgrave et~al.(2020)Musgrave, Belongie, and Lim]{Musgrave2020}
\BIBentryALTinterwordspacing
K.~Musgrave, S.~Belongie, and S.-N. Lim, ``{A Metric Learning Reality Check},''
  in \emph{ECCV}, 2020. [Online]. Available:
  \url{http://arxiv.org/abs/2003.08505}
\BIBentrySTDinterwordspacing

\bibitem[Sun et~al.(2020)Sun, Yu, Fang, Yang, Qu, Zhang, and
  Geng]{10.1145/3383313.3412489}
Z.~Sun, D.~Yu, H.~Fang, J.~Yang, X.~Qu, J.~Zhang, and C.~Geng, ``{Are We
  Evaluating Rigorously? Benchmarking Recommendation for Reproducible
  Evaluation and Fair Comparison},'' in \emph{Fourteenth ACM Conference on
  Recommender Systems}, ser. RecSys '20, 2020, pp. 23--32.

\bibitem[Bouthillier et~al.(2019)Bouthillier, Laurent, and
  Vincent]{pmlr-v97-bouthillier19a}
X.~Bouthillier, C.~Laurent, and P.~Vincent, ``{Unreproducible Research is
  Reproducible},'' in \emph{Proceedings of the 36th International Conference on
  Machine Learning}, vol.~97, 2019, pp. 725--734.

\bibitem[Raff(2019)]{Raff2019_quantify_repro}
E.~Raff, ``{A Step Toward Quantifying Independently Reproducible Machine
  Learning Research},'' in \emph{NeurIPS}, 2019.

\bibitem[Deng and Zheng(2021)]{Deng2021}
W.~Deng and L.~Zheng, ``Are labels always necessary for classifier accuracy
  evaluation?'' in \emph{Proceedings of the IEEE/CVF Conference on Computer
  Vision and Pattern Recognition (CVPR)}, June 2021, pp. 15\,069--15\,078.

\bibitem[Jiang et~al.(2021)Jiang, Nagarajan, Baek, and Kolter]{Jiang2021}
Y.~Jiang, V.~Nagarajan, C.~Baek, and J.~Z. Kolter, ``Assessing generalization
  of sgd via disagreement,'' \emph{ArXiv}, vol. abs/2106.13799, 2021.

\bibitem[Nov{\'a}k et~al.(2019)Nov{\'a}k, M{\'i}rovsk{\'y}, Rysov{\'a}, and
  Rysov{\'a}]{novak2019}
M.~Nov{\'a}k, J.~M{\'i}rovsk{\'y}, K.~Rysov{\'a}, and M.~Rysov{\'a},
  ``Exploiting large unlabeled data in automatic evaluation of coherence in
  czech,'' in \emph{Text, Speech, and Dialogue}, K.~Ek{\v{s}}tein, Ed., Cham,
  2019, pp. 197--210.

\bibitem[Oliver et~al.(2018)Oliver, Odena, Raffel, Cubuk, and
  Goodfellow]{oliver2018}
A.~Oliver, A.~Odena, C.~Raffel, E.~D. Cubuk, and I.~J. Goodfellow, ``Realistic
  evaluation of deep semi-supervised learning algorithms,'' in
  \emph{Proceedings of the 32nd International Conference on Neural Information
  Processing Systems}, ser. NIPS'18.\hskip 1em plus 0.5em minus 0.4em\relax Red
  Hook, NY, USA: Curran Associates Inc., 2018, p. 3239–3250.

\bibitem[Nguyen et~al.(2021)Nguyen, Raff, Nicholas, and Holt]{Nguyen2021}
A.~T. Nguyen, E.~Raff, C.~Nicholas, and J.~Holt, ``{Leveraging Uncertainty for
  Improved Static Malware Detection Under Extreme False Positive
  Constraints},'' in \emph{IJCAI-21 1st International Workshop on Adaptive
  Cyber Defense}, 2021.

\bibitem[Kent et~al.(1955)Kent, Berry, Luehrs~Jr., and Perry]{kent}
A.~Kent, M.~M. Berry, F.~U. Luehrs~Jr., and J.~W. Perry, ``Machine literature
  searching viii. operational criteria for designing information retrieval
  systems,'' \emph{American Documentation}, vol.~6, no.~2, pp. 93--101, 1955.

\bibitem[{B}ayer et~al.(2009){B}ayer, {C}omparetti, {H}lauschek, {K}ruegel, and
  {K}irda]{bayer}
\BIBentryALTinterwordspacing
U.~{B}ayer, P.~M. {C}omparetti, C.~{H}lauschek, C.~{K}ruegel, and E.~{K}irda,
  ``{S}calable, behavior-based malware clustering,'' in \emph{{NDSS} 2009, 16th
  {A}nnual {N}etwork and {D}istributed {S}ystem {S}ecurity {S}ymposium}, 02
  2009. [Online]. Available: \url{http://www.eurecom.fr/publication/2783}
\BIBentrySTDinterwordspacing

\bibitem[Li et~al.(2010)Li, Liu, Gao, and Reiter]{li}
P.~Li, L.~Liu, D.~Gao, and M.~K. Reiter, ``On challenges in evaluating malware
  clustering,'' in \emph{Recent Advances in Intrusion Detection}, S.~Jha,
  R.~Sommer, and C.~Kreibich, Eds., 2010, pp. 238--255.

\bibitem[Paige and Tarjan(1987)]{paige1987}
R.~Paige and R.~E. Tarjan, ``Three partition refinement algorithms,''
  \emph{SIAM Journal on Computing}, vol.~16, no.~6, pp. 973--989, 1987.

\bibitem[Chakraborty et~al.(2020)Chakraborty, Pierazzi, and
  Subrahmanian]{Chakraborty}
T.~Chakraborty, F.~Pierazzi, and V.~S. Subrahmanian, ``Ec2: Ensemble clustering
  and classification for predicting android malware families,'' \emph{IEEE
  Transactions on Dependable and Secure Computing}, vol.~17, pp. 262--277,
  2020.

\bibitem[Raff and Nicholas(2020)]{Raff2020a}
E.~Raff and C.~Nicholas, ``{A Survey of Machine Learning Methods and Challenges
  for Windows Malware Classification},'' in \emph{NeurIPS 2020 Workshop: ML
  Retrospectives, Surveys {\&} Meta-Analyses (ML-RSA)}, 2020.

\bibitem[Mohaisen et~al.(2015)Mohaisen, Alrawi, and Mohaisen]{mohaisen2015}
\BIBentryALTinterwordspacing
A.~Mohaisen, O.~Alrawi, and M.~Mohaisen, ``Amal: High-fidelity, behavior-based
  automated malware analysis and classification,'' \emph{Computers \&
  Security}, vol.~52, pp. 251 -- 266, 2015. [Online]. Available:
  \url{http://www.sciencedirect.com/science/article/pii/S0167404815000425}
\BIBentrySTDinterwordspacing

\bibitem[Mohaisen and Alrawi(2013)]{mohaisen2013}
\BIBentryALTinterwordspacing
A.~Mohaisen and O.~Alrawi, ``Unveiling zeus: Automated classification of
  malware samples,'' in \emph{Proceedings of the 22nd International Conference
  on World Wide Web}, ser. WWW ’13 Companion.\hskip 1em plus 0.5em minus
  0.4em\relax New York, NY, USA: Association for Computing Machinery, 2013, p.
  829–832. [Online]. Available: \url{https://doi.org/10.1145/2487788.2488056}
\BIBentrySTDinterwordspacing

\bibitem[Votipka et~al.(2019)Votipka, Rabin, Micinski, Foster, and
  Mazurek]{10.1145/3290607.3313040}
D.~Votipka, S.~Rabin, K.~Micinski, J.~S. Foster, and M.~L. Mazurek, ``{An
  Observational Investigation of Reverse Engineers' Process and Mental
  Models},'' in \emph{USENIX Security Symposium}, 2019.

\bibitem[Perdisci and U(2012)]{perdisci}
R.~Perdisci and M.~U, ``Vamo: Towards a fully automated malware clustering
  validity analysis,'' in \emph{Proceedings of the 28th Annual Computer
  Security Applications Conference}, 12 2012, pp. 329--338.

\bibitem[Zhou()]{malgenome}
Y.~Zhou, ``Malgenome project,'' \url{http://malgenomeproject.org/}, Last
  accessed on 2020-3-9.

\bibitem[Kotzias et~al.(2015)Kotzias, Matic, Rivera, and Caballero]{malsign}
P.~Kotzias, S.~Matic, R.~Rivera, and J.~Caballero, ``Certified pup: Abuse in
  authenticode code signing,'' in \emph{CCS '15}, 2015.

\bibitem[Nappa et~al.(2015)Nappa, Rafique, and Caballero]{nappa}
A.~Nappa, M.~Z. Rafique, and J.~Caballero, ``The malicia dataset:
  identification and analysis of drive-by download operations,''
  \emph{International Journal of Information Security}, vol.~14, no.~1, pp.
  15--33, 2015.

\bibitem[Zhu et~al.(2020)Zhu, Shi, Yang, Qin, Zhang, Song, and Wang]{zhu}
\BIBentryALTinterwordspacing
S.~Zhu, J.~Shi, L.~Yang, B.~Qin, Z.~Zhang, L.~Song, and G.~Wang, ``Measuring
  and modeling the label dynamics of online anti-malware engines,'' in
  \emph{29th {USENIX} Security Symposium ({USENIX} Security 20)}.\hskip 1em
  plus 0.5em minus 0.4em\relax Boston, MA: {USENIX} Association, Aug. 2020.
  [Online]. Available:
  \url{https://www.usenix.org/conference/usenixsecurity20/presentation/zhu}
\BIBentrySTDinterwordspacing

\bibitem[Botacin et~al.(2020)Botacin, Ceschin, {de Geus}, and Grégio]{botacin}
\BIBentryALTinterwordspacing
M.~Botacin, F.~Ceschin, P.~{de Geus}, and A.~Grégio, ``We need to talk about
  antiviruses: challenges \& pitfalls of av evaluations,'' \emph{Computers \&
  Security}, vol.~95, p. 101859, 2020. [Online]. Available:
  \url{http://www.sciencedirect.com/science/article/pii/S0167404820301310}
\BIBentrySTDinterwordspacing

\bibitem[Mohaisen et~al.(2014)Mohaisen, Alrawi, Larson, and
  McPherson]{mohaisen2014}
A.~Mohaisen, O.~Alrawi, M.~Larson, and D.~McPherson, ``Towards a methodical
  evaluation of antivirus scans and labels,'' in \emph{Information Security
  Applications}, Y.~Kim, H.~Lee, and A.~Perrig, Eds., Cham, 2014, pp. 231--241.

\bibitem[Huang and Stokes(2016)]{huang}
W.~Huang and J.~Stokes, ``Mtnet: A multi-task neural network for dynamic
  malware classification,'' in \emph{International Conference on Detection of
  Intrusions}, 07 2016.

\bibitem[Rieck()]{malheur}
K.~Rieck, ``Malheur dataset,'' \url{https://www.sec.cs.tu-bs.de/data/malheur/},
  Last accessed on 2020-3-9.

\bibitem[Rieck et~al.(2011)Rieck, Trinius, Willems, and Holz]{rieck}
K.~Rieck, P.~Trinius, C.~Willems, and T.~Holz, ``Automatic analysis of malware
  behavior using machine learning,'' \emph{Journal of Computer Security},
  vol.~19, 2011.

\bibitem[Arp()]{drebin}
D.~Arp, ``The drebin dataset,''
  \url{https://www.sec.cs.tu-bs.de/~danarp/drebin/download.html}, Last accessed
  on 2020-3-9.

\bibitem[Arp et~al.(2014)Arp, Spreitzenbarth, Hübner, Gascon, and Rieck]{arp}
D.~Arp, M.~Spreitzenbarth, M.~Hübner, H.~Gascon, and K.~Rieck, ``Drebin:
  Effective and explainable detection of android malware in your pocket,'' in
  \emph{Symposium on Network and Distributed System Security (NDSS)}, 02 2014.

\bibitem[Ronen et~al.(2018)Ronen, Radu, Feuerstein, Yom{-}Tov, and
  Ahmadi]{kaggle}
R.~Ronen, M.~Radu, C.~Feuerstein, E.~Yom{-}Tov, and M.~Ahmadi, ``Microsoft
  malware classification challenge,'' \emph{CoRR}, vol. abs/1802.10135, 2018.

\bibitem[{Zhou} and {Jiang}(2012)]{zhou}
Y.~{Zhou} and X.~{Jiang}, ``Dissecting android malware: Characterization and
  evolution,'' in \emph{2012 IEEE Symposium on Security and Privacy}, May 2012,
  pp. 95--109.

\bibitem[{Qiao} et~al.(2016){Qiao}, {Yun}, and {Zhang}]{vxheaven}
Y.~{Qiao}, X.~{Yun}, and Y.~{Zhang}, ``How to automatically identify the
  homology of different malware,'' in \emph{2016 IEEE Trustcom/BigDataSE/ISPA},
  Aug 2016, pp. 929--936.

\bibitem[mal()]{malicia}
``Dataset - malicia project,'' \url{http://malicia-project.com/dataset.html},
  Last accessed on 2020-3-9.

\bibitem[Wei et~al.(2017)Wei, Li, Roy, Ou, and Zhou]{AMD}
F.~Wei, Y.~Li, S.~Roy, X.~Ou, and W.~Zhou, ``Deep ground truth analysis of
  current android malware,'' in \emph{Detection of Intrusions and Malware, and
  Vulnerability Assessment}, M.~Polychronakis and M.~Meier, Eds., Cham, 2017,
  pp. 252--276.

\bibitem[Karbab et~al.(2018)Karbab, Debbabi, Derhab, and Mouheb]{maldozer}
E.~B. Karbab, M.~Debbabi, A.~Derhab, and D.~Mouheb, ``Maldozer: Automatic
  framework for android malware detection using deep learning,'' \emph{Digit.
  Investig.}, vol.~24, pp. S48--S59, 2018.

\bibitem[Anderson and Roth(2018)]{ember}
H.~S. Anderson and P.~Roth, ``Ember: An open dataset for training static pe
  malware machine learning models,'' 2018.

\bibitem[Kantchelian et~al.(2015)Kantchelian, Tschantz, Afroz, Miller, Shankar,
  Bachwani, Joseph, and Tygar]{kantchelian}
A.~Kantchelian, M.~C. Tschantz, S.~Afroz, B.~Miller, V.~Shankar, R.~Bachwani,
  A.~D. Joseph, and J.~D. Tygar, ``Better malware ground truth: Techniques for
  weighting anti-virus vendor labels,'' in \emph{ACM Workshop on Artificial
  Intelligence and Security}, 2015.

\bibitem[Wicherski(2009)]{pehash}
G.~Wicherski, ``pehash: A novel approach to fast malware clustering,'' in
  \emph{LEET}, 2009.

\bibitem[cla()]{clamav}
``Clamav,'' \url{http://anubis.iseclab.org/}, Last accessed on 2020-5-04.

\bibitem[vir()]{virusshare}
``Virusshare.com - because sharing is caring,'' \url{https://virusshare.com/},
  Last accessed on 2021-9-18.

\bibitem[Seymour(2016)]{seymour}
\BIBentryALTinterwordspacing
J.~Seymour, ``Bsideslv 2016: Labeling the virusshare corpus- lessons learned,''
  2016, bSides Las Vegas. [Online]. Available:
  \url{https://www.peerlyst.com/posts/bsideslv-2016-labeling-the-virusshare-corpus-lessons-
  learned-john-seymour}
\BIBentrySTDinterwordspacing

\bibitem[VirusTotal()]{virustotal}
VirusTotal, ``File statistics during last 7 days,''
  \url{https://www.virustotal.com/en/statistics/}, Last accessed on 2020-3-8.

\end{thebibliography}

\begin{appendices}

\section{Proof Details}
\label{sec:proofs}

\vspace*{0.25cm}

\setcounter{theorem}{0}
\setcounter{corollary}{0}

\begin{theorem}
\textit{Precision(C, R)} $\leq$ \textit{Precision(C, D)}
\end{theorem}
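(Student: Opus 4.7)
The plan is to unfold the definitions of \textit{Precision(C, R)} and \textit{Precision(C, D)} and compare them term-by-term over the index $i$ of predicted clusters, exploiting the refinement hypothesis to dominate each $R$-term by a $D$-term.

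First, I would write
\[
\text{Precision}(C, R) = \frac{1}{m} \sum_{i=1}^{c} |C_i \cap R_{f'(i)}|, \qquad \text{Precision}(C, D) = \frac{1}{m} \sum_{i=1}^{c} |C_i \cap D_{f(i)}|,
\]
with $f'(i) = \arg\max_k |C_i \cap R_k|$ and $f(i) = \arg\max_j |C_i \cap D_j|$. Since the common factor $1/m$ and the common outer sum over $i$ line the two expressions up perfectly, it suffices to show the pointwise inequality $|C_i \cap R_{f'(i)}| \leq |C_i \cap D_{f(i)}|$ for every $i \in \{1,\dots,c\}$.

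The key step uses Definition~\ref{def:refinement}: because $R$ is a refinement of $D$, the cluster $R_{f'(i)} \in R$ is contained in some $D_{j^\star} \in D$. Monotonicity of intersection then gives $|C_i \cap R_{f'(i)}| \leq |C_i \cap D_{j^\star}|$. Finally, $f(i)$ was chosen to maximize $|C_i \cap D_j|$ over all $j$, so $|C_i \cap D_{j^\star}| \leq |C_i \cap D_{f(i)}|$. Chaining these two inequalities yields the desired pointwise bound, and summing over $i$ and dividing by $m$ completes the argument.

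I do not anticipate a serious obstacle here: the proof is essentially a bookkeeping exercise built on the refinement property plus the arg-max definition of $f$. The only place to be careful is making sure we do not accidentally assume that $f'(i)$ and $f(i)$ are "aligned" in any particular way — the argument goes through cleanly because we merely need the existence of \emph{some} $j^\star$ with $R_{f'(i)} \subseteq D_{j^\star}$, and the maximality of $f(i)$ handles the rest. A brief sanity check at the end on Figure~\ref{fig:CD}, confirming that the inequality matches the example's numerical values, would round out the presentation.
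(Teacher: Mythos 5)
Your argument is correct and is essentially identical to the paper's own proof: both fix an index $i$, use the refinement property to find some $D_{j^\star}$ containing $R_{f'(i)}$, chain $|C_i \cap R_{f'(i)}| \leq |C_i \cap D_{j^\star}| \leq |C_i \cap D_{f(i)}|$ via the arg-max definition of $f$, and sum over $i$. No gaps; nothing further needed.
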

\begin{proof}[Proof of Theorem \ref{thm:precisiongtr}] \label{prf:precisiongtr}
Suppose some $i$ s.t. $1 \leq i \leq c$. Since $R$ is a refinement of $D$, $\exists D_{j} \in D$ s.t. $R_{f'(i)} \subseteq D_{j}$. Because $R_{f'(i)} \subseteq D_{j}$, it must be that $|C_{i} \cap R_{f'(i)}| \leq |C_{i} \cap D_{j}|$. By definition, $f(i) =$ arg$\max\limits_{j}|C_{i} \cap D_{j}|$. Therefore, $|C_{i} \cap R_{f'(i)}| \leq |C_{i} \cap D_{j}| \leq |C_{i} \cap D_{f(i)}|$. We can simply sum over this inequality to obtain:
$$\frac{1}{m} \sum\limits_{i=1}^{c} |C_{i} \cap R_{f'(i)}| \leq \frac{1}{m} \sum\limits_{i=1}^{c} |C_{i} \cap D_{f(i)}|$$ By Definition \ref{def:precision}, $Precision(C, R) \leq Precision(C, D)$.
\end{proof}

\vspace*{0.25cm}

\begin{theorem}
\textit{Recall(C, R)} $\geq$ \textit{Recall(C, D)}
\end{theorem}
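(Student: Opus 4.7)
The plan is to mirror the structure of the proof of Theorem~\ref{thm:precisiongtr}, but exploit Property~1 rather than Definition~\ref{def:refinement} directly, because recall sums over reference clusters while precision sums over predicted clusters. The key observation is that each $D_j$ decomposes as a disjoint union of GTR cells: by Property~1, for every $D_j \in D$ there is a unique collection $Q_j = \{Q_{j\ell}\}_{1 \le \ell \le q_j}$ of cells of $R$ whose union is $D_j$. Equivalently, the map sending each $R_k$ to the unique $D_j$ containing it induces a partition $\{K_j\}_{j=1}^{d}$ of $\{1,\ldots,r\}$, where $K_j = \{k : R_k \subseteq D_j\}$.

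First I would fix some $j$ with $1 \le j \le d$ and expand $|C_{g(j)} \cap D_j|$ using this decomposition, obtaining
$$|C_{g(j)} \cap D_j| \;=\; \sum_{k \in K_j} |C_{g(j)} \cap R_k|.$$
Next I would apply the definition $g'(k) = \arg\max_i |C_i \cap R_k|$ term by term: for each individual $k \in K_j$, the index $g(j)$ is just one candidate in the $\arg\max$ defining $g'(k)$, so $|C_{g(j)} \cap R_k| \le |C_{g'(k)} \cap R_k|$. Summing this inequality over $k \in K_j$ gives $|C_{g(j)} \cap D_j| \le \sum_{k \in K_j} |C_{g'(k)} \cap R_k|$.

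Finally I would sum over $j$ and use the fact that $\{K_j\}_{j=1}^{d}$ partitions $\{1,\ldots,r\}$ to collapse the double sum:
$$\sum_{j=1}^{d} |C_{g(j)} \cap D_j| \;\le\; \sum_{j=1}^{d} \sum_{k \in K_j} |C_{g'(k)} \cap R_k| \;=\; \sum_{k=1}^{r} |C_{g'(k)} \cap R_k|.$$
Dividing by $m$ and invoking Definition~\ref{def:recall} yields $Recall(C,D) \le Recall(C,R)$, which is the claim.

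I expect the main subtlety, rather than a true obstacle, to be justifying the index swap at the end: one must be clear that Property~1 guarantees both existence and \emph{uniqueness} of the collection $Q_j$, so each $R_k$ is assigned to exactly one $D_j$, and hence the $K_j$'s really do form a partition of $\{1,\ldots,r\}$ (no double counting, no omissions). Without that, the final equality would only be an inequality in one direction and the argument could collapse. Everything else is a routine chain of inequalities with no case analysis required.
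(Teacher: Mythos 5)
Your proposal is correct and follows essentially the same route as the paper's proof: decompose each $D_j$ into its constituent GTR cells via Property~1, apply the $\arg\max$ definition of $g'$ cell by cell, and collapse the double sum using the fact that the cells of $R$ are partitioned among the $D_j$'s. The uniqueness point you flag is exactly the bijection the paper invokes to justify the index swap, so the arguments match in substance as well as in structure.
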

\begin{proof}[Proof of Theorem \ref{thm:recallgtr}] \label{prf:recallgtr}
Suppose some $j$ s.t. $1 \leq j \leq d$. Because $R$ is a refinement of $D$, $\exists Q_{j} = \{Q_{j\ell}\}_{1 \leq \ell \leq q_{j}}$ s.t. $Q_{j\ell} \in R$ and $D_{j} = \bigcup\limits_{\ell=1}^{q_{j}}Q_{j\ell}$. We can say that $|D_{j}| = \sum\limits_{\ell=1}^{q_{j}}|Q_{j\ell}|$ and furthermore that $|C_{g(j)} \cap D_{j}| = \sum\limits_{\ell=1}^{q_{j}}|C_{g(j)} \cap Q_{j\ell}|$. We know that $\forall Q_{j\ell} \in Q_{j}$, $\exists R_{k} \in R$ s.t. $Q_{j\ell} = Q_{k}$. By definition $g'(k) =$ arg$\max\limits_{i}|C_{i} \cap R_{k}|$, and $R_{k} = Q_{j\ell}$, so $|C_{g'(k)} \cap R_{k}| \geq |C_{g(j)} \cap Q_{j\ell}|$. Since by Property 1 the sets $R_{k}$ are in bijection with the sets $Q_{j\ell}$, we can sum over this inequality to obtain: 
$$\frac{1}{m} \sum\limits_{k=1}^{r} |C_{g'(k)} \cap R_{k}| \geq \frac{1}{m} \sum\limits_{j=1}^{d} \sum\limits_{\ell=1}^{q_{j}} |C_{g(j)} \cap Q_{j\ell}|$$ $$\frac{1}{m} \sum\limits_{j=1}^{d} \sum\limits_{\ell=1}^{q_{j}} |C_{g(j)} \cap Q_{j\ell}| = \frac{1}{m} \sum\limits_{j=1}^{d} |C_{g(j)} \cap D_{j}|$$
By Definition \ref{def:recall}, \textit{Recall(C, R)} $\geq$ \textit{Recall(C, D)}.
\end{proof}

\vspace*{0.25cm}

\begin{corollary}
\textit{Recall(C, R)} $\geq$ \textit{Accuracy(C, D)}
\end{corollary}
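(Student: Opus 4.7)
The plan is to chain Theorem \ref{thm:recallgtr} with Definition \ref{def:accuracy}. Theorem \ref{thm:recallgtr} already gives $\textit{Recall}(C, R) \geq \textit{Recall}(C, D)$ under the standard $\arg\max$ convention for the mapping functions, and Definition \ref{def:accuracy} states that in the special setting where accuracy is defined (namely $c = d$ with the classifier and reference label sets identified), $\textit{Accuracy}(C, D) = \textit{Recall}(C, D)$ when $g$ is taken to be the identity. My task is therefore to show that the $\arg\max$-version of $\textit{Recall}(C, D)$ that appears in Theorem \ref{thm:recallgtr} is at least as large as the identity-version that equals $\textit{Accuracy}(C, D)$.

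To close that gap, I would invoke the defining property of $\arg\max$: for each $j$ with $1 \leq j \leq d$, the index $\arg\max_i |C_i \cap D_j|$ maximizes the intersection size over all admissible choices of $g(j) \in \{1,\dots,c\}$, so in particular $|C_{\arg\max_i |C_i \cap D_j|} \cap D_j| \geq |C_j \cap D_j|$ term by term. Summing over $j$ and dividing by $m$ yields $\textit{Recall}(C, D)_{\arg\max} \geq \textit{Recall}(C, D)_{\mathrm{id}}$, and the right-hand side equals $\textit{Accuracy}(C, D)$ by Definition \ref{def:accuracy}. Combining with Theorem \ref{thm:recallgtr} then produces $\textit{Recall}(C, R) \geq \textit{Recall}(C, D)_{\arg\max} \geq \textit{Accuracy}(C, D)$, which is exactly the claim.

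The main point to be careful about — and really the only non-trivial step — is disambiguating which convention for $g$ is in force on each side of each inequality, since the recall on the right of Theorem \ref{thm:recallgtr} uses $\arg\max$ while accuracy is defined only under the identity mapping. Once that bookkeeping is made explicit, no calculation beyond a pointwise comparison and a sum is needed, so I would not expect any real obstacle; the corollary is essentially a one-line consequence of the preceding theorem together with the observation that among all admissible mappings, $\arg\max$ achieves the largest per-cluster overlap.
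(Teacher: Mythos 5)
Your proposal is correct and follows essentially the same route as the paper's own proof: invoke Theorem \ref{thm:recallgtr}, observe that recall under the $\arg\max$ mapping dominates recall under the identity mapping, and then apply Definition \ref{def:accuracy}. The only difference is that you spell out the pointwise $\arg\max$ dominance and the summation explicitly, which the paper merely asserts.
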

\begin{proof} [Proof of Corollary \ref{cor:accuracygtr}] \label{prf:accuracygtr}
According to \autoref{thm:recallgtr}, \textit{Recall(C, R)} $\geq$ \textit{Recall(C, D)}. It must be the case that \textit{Recall(C, D)} where $g(j) = $ arg$\max\limits_{i}|C_{i} \cap D_{j}|$ $\geq$ \textit{Recall(C, D)} where $g$ is the identity function. Using Definition \ref{def:accuracy}, \textit{Recall(C, D)} = \textit{Accuracy(C, D)} when $g$ is the identity function. Therefore, \textit{Recall(C, R)} $\geq$ \textit{Accuracy(C, D)}.
\end{proof}

\vspace*{0.25cm}

\begin{theorem}
$\lvert$\textit{Precision(C, S)} $-$ \textit{Precision(C, $\hat{S}$)}$\rvert$ $\leq \frac{1}{m}$
\end{theorem}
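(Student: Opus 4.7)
The plan is to localize the effect of the single moved data point. Let $x$ be the unique point whose cluster membership differs between $S$ and $\hat{S}$: say $x \in S_a$ but $x \in \hat{S}_b$ with $a \neq b$. Then $\hat{S}_a = S_a \setminus \{x\}$, $\hat{S}_b = S_b \cup \{x\}$, and $\hat{S}_j = S_j$ for all $j \notin \{a,b\}$. Let $C_p$ be the unique predicted cluster that contains $x$. The first observation is that $|C_i \cap \hat{S}_j| = |C_i \cap S_j|$ for every $i \neq p$ and every $j$, because the only change in the partition affects the set containing $x$, and $x \notin C_i$. Hence for any $i \neq p$, $\max_j |C_i \cap \hat{S}_j| = \max_j |C_i \cap S_j|$.

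Next, I would unfold the definition as Precision$(C,S) = \tfrac{1}{m} \sum_i \max_j |C_i \cap S_j|$, which is legitimate because $f(i) = \arg\max_j |C_i \cap S_j|$. The previous observation reduces the difference between Precision$(C,S)$ and Precision$(C,\hat{S})$ to the single term $i = p$:
\[
\bigl| \text{Precision}(C,S) - \text{Precision}(C,\hat{S}) \bigr|
= \frac{1}{m}\, \bigl| \max_j |C_p \cap S_j| - \max_j |C_p \cap \hat{S}_j| \bigr|.
\]
So it suffices to show this inner quantity is at most $1$.

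To bound the shift in the maximum, I would argue symmetrically. For the upper direction, note that $|C_p \cap \hat{S}_j| \leq |C_p \cap S_j| + 1$ for every $j$ (the gain is $+1$ only when $j = b$), so $\max_j |C_p \cap \hat{S}_j| \leq \max_j |C_p \cap S_j| + 1$. For the lower direction, pick $j^\star = \arg\max_j |C_p \cap S_j|$. Then $|C_p \cap \hat{S}_{j^\star}| \geq |C_p \cap S_{j^\star}| - 1$ (the loss is $-1$ only when $j^\star = a$), and so $\max_j |C_p \cap \hat{S}_j| \geq \max_j |C_p \cap S_j| - 1$. Combining yields $| \max_j |C_p \cap S_j| - \max_j |C_p \cap \hat{S}_j| | \leq 1$, which finishes the proof.

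The main obstacle I anticipate is the $\arg\max$ in the definition of precision: one cannot naively subtract intersection counts term by term because $f$ may be different under $S$ and $\hat{S}$. The clean way around this is the reformulation in terms of $\max_j$ rather than a fixed mapping, after which the argument is essentially a Lipschitz bound on the maximum of a vector each of whose coordinates has moved by at most $1$, with only one coordinate of a single $i$-row changing at all.
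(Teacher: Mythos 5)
Your proof is correct and follows essentially the same route as the paper's: rewrite precision as $\frac{1}{m}\sum_i \max_j |C_i \cap S_j|$ and bound the shift of each max term by $1$. Your version is in fact slightly cleaner, since you correctly localize the change to the single row $i=p$ with $x \in C_p$, whereas the paper tracks two predicted clusters $C_a$ and $C_b$ even though only the one containing the moved point can have any of its intersection counts change.
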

\begin{proof}[Proof of Theorem \ref{thm:precision1err}] \label{prf:precision1err}
Let $S = \{S_{t}\}_{1\leq t \leq s}$ be an arbitrary partition of $M$. Let the function $f : \{1...c\} \mapsto \{1...s\}$ be defined as $f(i) = $ arg$\max\limits_{t}|C_{i} \cap S_{t}|$. Suppose some $M_{n} \in M$, some $S_{x} \in S$ s.t. $M_{n} \in S_{x}$, and some $S_{y}$ s.t. $S_{y} \in S$ or $S_{y} = \{\text{\O}\}$. Furthermore, suppose some $C_{a}, C_{b} \in C$ s.t. $M_{n} \in C_{a}$ and $b = f(y)$. Let $\hat{S} = \{\hat{S}_{\hat{t}}\}_{1\leq \hat{t} \leq \hat{s}}$ be a clustering identical to $S$ except for one cluster label change, which is given by $\hat{S}_{x}$ = $S_{x} - \{M_{n}\}$ and $\hat{S}_{y}$ = $S_{y} \cup \{M_{n}\}$. Let the function $\hat{f} : \{1...c\} \mapsto \{1...\hat{s}\}$ be defined as $\hat{f}(i) = $ arg$\max\limits_{\hat{u}}|C_{i} \cap \hat{S}_{\hat{u}}|$. At minimum $|C_{a} \cap \hat{S}_{\hat{f}(a)}| = |C_{a} \cap S_{f(a)}| - 1$ and at maximum $|C_{a} \cap \hat{S}_{\hat{f}(a)}| = |C_{a} \cap S_{f(a)}|$. Likewise, at minimum $|C_{b} \cap \hat{S}_{\hat{f}(b)}| = |C_{b} \cap S_{f(b)}|$ and at maximum $|C_{b} \cap \hat{S}_{\hat{f}(b)}| = |C_{b} \cap S_{f(b)}| + 1$. Finally, because all elements in $M$ are unique and $C$ partitions $M$, $\forall C_{i} \in C$ s.t. $i \neq a$ and $i \neq b$, $|C_{i} \cap \hat{S}_{\hat{f}(i)}| = |C_{i} \cap S_{f(i)}|$. Therefore, $\sum\limits_{t=1}^{s} |C_{g(t)} \cap S_{t}| - 1 \leq \sum\limits_{\hat{t}=1}^{\hat{s}} |C_{\hat{g}(\hat{t})} \cap \hat{S}_{\hat{t}}| \leq \sum\limits_{t=1}^{s} |C_{g(t)} \cap S_{t}| + 1$. By Definition \ref{def:precision}, \textit{Precision(C, S)} $-$ $\frac{1}{m} \leq$ \textit{Precision(C, $\hat{S}$)} $\leq$ \textit{Precision(C, S)} $+$ $\frac{1}{m}$. We write this as $\lvert$\textit{Precision(C, S)} $-$ \textit{Precision(C, $\hat{S}$)}$\rvert$ $\leq \frac{1}{m}$. 
\end{proof}

\vspace*{0.25cm}

\begin{corollary}
$\lvert$\textit{Precision(C, R)} $-$ \textit{Precision(C, $\hat{R}$)}$\rvert$ $\leq \frac{\epsilon}{m}$
\end{corollary}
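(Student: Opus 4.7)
The plan is to prove this corollary by induction on $\epsilon$, using Theorem \ref{thm:precision1err} as the workhorse and closing out with the triangle inequality. By Definition \ref{def:agtr}, $\hat{R}$ can be transformed into the GTR $R$ by correcting the cluster memberships of exactly $\epsilon$ data points, so there exists a sequence of partitions
\[
\hat{R} = R^{(0)}, \, R^{(1)}, \, \ldots, \, R^{(\epsilon)} = R
\]
such that each $R^{(i+1)}$ differs from $R^{(i)}$ by the cluster membership of exactly one data point (namely, one of the $\epsilon$ data points whose correction reconciles $\hat{R}$ with $R$).

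First I would invoke Theorem \ref{thm:precision1err} with $S = R^{(i)}$ and $\hat{S} = R^{(i+1)}$ for each $0 \leq i \leq \epsilon - 1$ to conclude
\[
\bigl\lvert \textit{Precision}(C, R^{(i)}) - \textit{Precision}(C, R^{(i+1)}) \bigr\rvert \leq \frac{1}{m}.
\]
Note that Theorem \ref{thm:precision1err} is stated for arbitrary partitions $S$ and $\hat{S}$ differing in a single cluster assignment, so it applies regardless of whether the intermediate partitions $R^{(i)}$ happen to be GTRs; no refinement structure is required at the intermediate steps.

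Next I would apply the triangle inequality along the chain:
\[
\bigl\lvert \textit{Precision}(C, \hat{R}) - \textit{Precision}(C, R) \bigr\rvert \leq \sum_{i=0}^{\epsilon-1} \bigl\lvert \textit{Precision}(C, R^{(i)}) - \textit{Precision}(C, R^{(i+1)}) \bigr\rvert \leq \epsilon \cdot \frac{1}{m} = \frac{\epsilon}{m}.
\]
This yields the desired bound.

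The main obstacle I anticipate is purely a bookkeeping one: ensuring that the single-point change used at step $i$ is well-defined given any prior choices, and that the sequence genuinely has length $\epsilon$ (not fewer, not more). This follows from Definition \ref{def:agtr}, which prescribes exactly $\epsilon$ corrections; I would simply enumerate those $\epsilon$ data points in any fixed order and apply their corrections one at a time. No issues of maximization in the definitions of $f$ or $g$ complicate the argument, because Theorem \ref{thm:precision1err} already absorbs those subtleties.
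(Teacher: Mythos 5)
Your proposal is correct and follows essentially the same route as the paper: the paper's proof also transforms one partition into the other by $\epsilon$ sequential single-point corrections, invokes Theorem \ref{thm:precision1err} at each step, and accumulates the per-step bound of $\frac{1}{m}$ into $\frac{\epsilon}{m}$. Your version merely makes the intermediate chain and the triangle inequality explicit, which the paper leaves implicit.
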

\begin{proof}[Proof of Corollary \ref{thm:precisionerr}] \label{prf:precisionerr}
By Theorem \ref{thm:precision1err}, $\lvert$\textit{Precision(C, S)} $-$ \textit{Precision(C, $\hat{S}$)}$\rvert$ $\leq \frac{1}{m}$ for some arbitrary clustering $S$ and a second clustering $\hat{S}$ that is equivalent to $S$ but with a single data point belonging to a different cluster. Given a GTR $R$ and a corresponding AGTR $\hat{R}$, we can sequentially change the cluster membership of $\epsilon$ data points in $R$ to obtain $\hat{R}$. At each step the precision value can change by at most $\pm\frac{1}{m}$. Therefore, $\lvert$\textit{Precision(C, R)} $-$ \textit{Precision(C, $\hat{R}$)}$\rvert$ $\leq \frac{\epsilon}{m}$.
\end{proof}

\vspace*{0.25cm}

\begin{corollary} $\lvert$\textit{Precision(C, D)} $-$ \textit{Precision(C, $\hat{R}$)}$\rvert$ $\leq \frac{\delta}{m}$
\end{corollary}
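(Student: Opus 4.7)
The plan is to mimic the argument used in the proof of Corollary \ref{thm:precisionerr} but replace the role of $R$ and $\epsilon$ with the role of $D$ and $\delta$. By the definition of $\delta$ given just before the corollary, $\hat{R}$ can be transformed into the ground truth reference clustering $D$ by altering the cluster membership of exactly $\delta$ data points. Crucially, this transformation can be decomposed into a sequence of $\delta$ single-point moves, each of which produces a valid partition of $M$.

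Concretely, I would construct a sequence of partitions $\hat{R} = P^{(0)}, P^{(1)}, \ldots, P^{(\delta)} = D$, where $P^{(k+1)}$ is obtained from $P^{(k)}$ by changing the cluster assignment of a single data point (one of the $\delta$ points identified by the minimum transformation). Theorem \ref{thm:precision1err} then applies to each consecutive pair, giving
\[
\bigl\lvert \textit{Precision}(C, P^{(k)}) - \textit{Precision}(C, P^{(k+1)}) \bigr\rvert \;\leq\; \tfrac{1}{m}
\]
for every $0 \leq k < \delta$. A telescoping application of the triangle inequality over these $\delta$ steps yields
\[
\bigl\lvert \textit{Precision}(C, \hat{R}) - \textit{Precision}(C, D) \bigr\rvert \;\leq\; \tfrac{\delta}{m},
\]
which is the claim.

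I do not expect any serious obstacle, since the argument is structurally identical to the proof of Corollary \ref{thm:precisionerr}; the only subtle point is justifying that a length-$\delta$ sequence of single-point cluster-membership changes connecting $\hat{R}$ to $D$ actually exists. This follows directly from how $\delta$ is defined as the minimum such count: any witnessing set of $\delta$ reassignments can be applied one at a time, and each intermediate partition is well-defined because moving one point between clusters (possibly into a new singleton cluster, possibly emptying an old one) always yields another partition of $M$. With that observation in place, the telescoping step is routine and the bound follows immediately.
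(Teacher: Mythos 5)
Your proposal is correct and follows essentially the same route as the paper's proof: both decompose the transformation between $\hat{R}$ and $D$ into $\delta$ single-point reassignments, apply Theorem \ref{thm:precision1err} at each step, and telescope. Your version is simply more explicit about the intermediate partitions and the triangle inequality, which the paper leaves implicit.
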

\begin{proof}[Proof of Corollary \ref{thm:precisionerrbound}] \label{prf:precisionerrbound}
By Theorem \ref{thm:precision1err}, $\lvert$\textit{Precision(C, S)} $-$ \textit{Precision(C, $\hat{S}$)}$\rvert$ $\leq \frac{1}{m}$ for some arbitrary clustering $S$ and a second clustering $\hat{S}$ that is equivalent to $S$ but with a single data point belonging to a different cluster. Given a ground truth reference clustering $D$ and a corresponding AGTR $\hat{R}$, we can sequentially change the cluster membership of $\delta$ data points in $D$ to obtain $\hat{R}$. At each step the precision value can change by at most $\pm\frac{1}{m}$. Therefore, $\lvert$\textit{Precision(C, D)} $-$ \textit{Precision(C, $\hat{R}$)}$\rvert$ $\leq \frac{\delta}{m}$.
\end{proof}

\vspace*{0.25cm}

\begin{theorem}
$\lvert$\textit{Recall(C, S)} $-$ \textit{Recall(C, $\hat{S}$)}$\rvert$ $\leq \frac{1}{m}$
\end{theorem}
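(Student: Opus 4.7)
The plan is to mirror the strategy used for Theorem \ref{thm:precision1err}, but exploiting that recall sums over the reference clusters $S$ (indexed by $t$) rather than over the predicted clusters $C$ (indexed by $i$). Concretely, let $M_n \in S_x \cap C_a$, let $S_y$ be any other cluster in $S$ (or $\{\text{\O}\}$ to allow $M_n$ to form a new singleton), and let $\hat S$ agree with $S$ except that $\hat S_x = S_x - \{M_n\}$ and $\hat S_y = S_y \cup \{M_n\}$. Define $\hat g(t) = \arg\max_{i}|C_i \cap \hat S_t|$.

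First I would argue that moving a single point only perturbs two summands. For every $t \notin \{x,y\}$, the cluster $\hat S_t = S_t$ is unchanged and $|C_i \cap \hat S_t| = |C_i \cap S_t|$ for all $i$, so $|C_{\hat g(t)} \cap \hat S_t| = |C_{g(t)} \cap S_t|$ exactly. Thus the only contributions to $\sum_t |C_{\hat g(t)} \cap \hat S_t| - \sum_t |C_{g(t)} \cap S_t|$ come from $t = x$ and $t = y$.

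Next I would bound each of these two perturbations separately. At $t = x$: removing $M_n$ decreases $|C_a \cap S_x|$ by exactly one and leaves $|C_i \cap S_x|$ unchanged for all $i \neq a$. Hence every intersection size in the $\arg\max$ defining $\hat g(x)$ is either equal to or exactly one less than the corresponding size for $g(x)$, which gives $|C_{g(x)} \cap S_x| - 1 \le |C_{\hat g(x)} \cap \hat S_x| \le |C_{g(x)} \cap S_x|$. At $t = y$: adding $M_n$ increases $|C_a \cap S_y|$ by exactly one and leaves all other $|C_i \cap S_y|$ unchanged, so a symmetric case analysis on whether $\hat g(y)$ remains $g(y)$ or switches to $a$ yields $|C_{g(y)} \cap S_y| \le |C_{\hat g(y)} \cap \hat S_y| \le |C_{g(y)} \cap S_y| + 1$. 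Summing these inequalities gives $-1 \le \sum_t |C_{\hat g(t)} \cap \hat S_t| - \sum_t |C_{g(t)} \cap S_t| \le 1$, and dividing by $m$ together with Definition \ref{def:recall} yields $|\textit{Recall}(C,S) - \textit{Recall}(C,\hat S)| \le \tfrac{1}{m}$.

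The main obstacle is the $\arg\max$ bookkeeping at $t = x$ and $t = y$, since $\hat g(x)$ or $\hat g(y)$ can differ from $g(x)$ or $g(y)$ whenever ties are broken differently after the perturbation; I would handle this cleanly by bounding the $\max$ value directly rather than tracking which index attains it. The only edge cases to check are when $S_x = \{M_n\}$ (so $\hat S_x$ disappears, in which case that summand is simply dropped, trivially a decrease of at most one) and when $S_y = \{\text{\O}\}$ creates a new cluster (in which case the sum gains exactly one new term equal to $1$); both fit into the same $[-1, 1]$ envelope on the total change.
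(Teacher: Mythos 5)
Your proposal is correct and follows essentially the same route as the paper's proof: perturb only the two affected reference clusters $S_x$ and $S_y$, bound each of the two maxima by showing it can shift by at most one in the appropriate direction, note all other summands are unchanged, and divide by $m$. Your explicit treatment of the tie-breaking in the $\arg\max$ and of the singleton/new-cluster edge cases is slightly more careful than the paper's, but the argument is the same.
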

\begin{proof}[Proof of Theorem \ref{thm:recall1err}] \label{prf:recall1err}
Let $S = \{S_{t}\}_{1\leq t \leq s}$ be an arbitrary partition of $M$. Let the label function $g : \{1...s\} \mapsto \{1...c\}$ be defined as $g(t) = $ arg$\max\limits_{i}|C_{i} \cap S_{t}|$. Suppose some $M_{n} \in M$, some $S_{x} \in S$ s.t. $M_{n} \in S_{x}$, and some $S_{y}$ s.t. $S_{y} \in S$ or $S_{y} = \{\text{\O}\}$. Let $\hat{S} = \{\hat{S}_{\hat{t}}\}_{1\leq \hat{t} \leq \hat{s}}$ be a clustering identical to $S$ except for one cluster label change, which is given by $\hat{S}_{x}$ = $S_{x} - \{M_{n}\}$ and $\hat{S}_{y}$ = $S_{y} \cup \{M_{n}\}$. Let the function $\hat{g} : \{1...\hat{s}\} \mapsto \{1...c\}$ be defined as $\hat{g}(\hat{t}) = $ arg$\max\limits_{i}|C_{i} \cap \hat{S}_{\hat{t}}|$. At minimum $|C_{\hat{g}(x)} \cap \hat{S}_{x}| = |C_{g(x)} \cap S_{x}| - 1$ and at maximum $|C_{\hat{g}(x)} \cap \hat{S}_{x}| = |C_{g(x)} \cap S_{x}|$. Similarly, at minimum $|C_{\hat{g}(y)} \cap \hat{S}_{y}| = |C_{g(y)} \cap S_{y}|$ and at maximum $|C_{\hat{g}(y)} \cap \hat{S}_{y}| = |C_{g(y)} \cap S_{y}| + 1$. Because each other cluster in $\hat{S}$ is identical to some cluster in $S$, $\sum\limits_{t=1}^{s} |C_{g(t)} \cap S_{t}| - 1 \leq \sum\limits_{\hat{t}=1}^{\hat{s}} |C_{\hat{g}(\hat{t})} \cap \hat{S}_{\hat{t}}| \leq \sum\limits_{t=1}^{s} |C_{g(t)} \cap S_{t}| + 1$. Using Definition \ref{def:recall}, we obtain \textit{Recall(C, S)} $-$ $\frac{1}{m} \leq$ \textit{Recall(C, $\hat{S}$)} $\leq$ \textit{Recall(C, S)} $+$ $\frac{1}{m}$. We write this as $\lvert$\textit{Recall(C, S)} $-$ \textit{Recall(C, $\hat{S}$)}$\rvert$ $\leq \frac{1}{m}$.
\end{proof}

\vspace*{0.25cm}

\begin{corollary} \label{thm:recallerr2}
$\lvert$\textit{Recall(C, R)} $-$ \textit{Recall(C, $\hat{R}$)}$\rvert$ $\leq \frac{\epsilon}{m}$
\end{corollary}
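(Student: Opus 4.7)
The plan is to mirror exactly the argument used for the precision analog (Corollary \ref{thm:precisionerr}), just swapping in the recall one-step lemma in place of the precision one. The structure is forced by the statement: we already have \autoref{thm:recall1err}, which says that moving a single data point between clusters in the reference partition perturbs $\textit{Recall}(C, \cdot)$ by at most $\frac{1}{m}$, and by Definition \ref{def:agtr} the AGTR $\hat{R}$ is obtained from the true GTR $R$ by correcting exactly $\epsilon$ data points.

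First I would interpolate between $R$ and $\hat{R}$ by constructing a sequence of partitions $S^{(0)} = R, S^{(1)}, \ldots, S^{(\epsilon)} = \hat{R}$, where $S^{(k)}$ is obtained from $S^{(k-1)}$ by changing the cluster membership of a single one of the $\epsilon$ differing data points. This is well-defined because the symmetric difference of $R$ and $\hat{R}$, viewed as equivalence relations, is witnessed by exactly the $\epsilon$ data points from Definition \ref{def:agtr}, so we can flip them one at a time. At each step the hypotheses of \autoref{thm:recall1err} are satisfied (consecutive partitions differ at only one point), so
\[
\bigl\lvert \textit{Recall}(C, S^{(k)}) - \textit{Recall}(C, S^{(k-1)}) \bigr\rvert \;\leq\; \tfrac{1}{m}.
\]

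Then I would apply the triangle inequality telescopically across the $\epsilon$ steps:
\[
\bigl\lvert \textit{Recall}(C, R) - \textit{Recall}(C, \hat{R}) \bigr\rvert
\;\leq\; \sum_{k=1}^{\epsilon} \bigl\lvert \textit{Recall}(C, S^{(k)}) - \textit{Recall}(C, S^{(k-1)}) \bigr\rvert
\;\leq\; \tfrac{\epsilon}{m},
\]
which is the claimed bound.

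I do not foresee any genuine obstacle, since this is a direct transcription of the proof of Corollary \ref{thm:precisionerr} with ``precision'' replaced by ``recall'' and \autoref{thm:precision1err} replaced by \autoref{thm:recall1err}. The only point worth being careful about is ensuring the interpolating sequence is legitimate: specifically, that each intermediate $S^{(k)}$ is still a partition of $M$ (so that \autoref{thm:recall1err} applies to it). This is immediate because moving a single data point from one cluster to another—or to a fresh singleton cluster—always preserves the partition property, so the chain is valid and the bound follows.
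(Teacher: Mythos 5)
Your proposal is correct and matches the paper's own proof essentially verbatim: the paper likewise interpolates from $R$ to $\hat{R}$ by sequentially changing the cluster membership of the $\epsilon$ differing data points, invokes \autoref{thm:recall1err} at each step, and accumulates the per-step bound of $\frac{1}{m}$ into $\frac{\epsilon}{m}$. Your extra remark that each intermediate stage remains a valid partition is a reasonable point of care that the paper leaves implicit.
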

\begin{proof}[Proof of Corollary \ref{thm:recallerr2}] \label{prf:recallerr2}
By Theorem \ref{thm:recall1err}, $\lvert$\textit{Recall(C, S)} $-$ \textit{Recall(C, $\hat{S}$)}$\rvert$ $\leq \frac{1}{m}$ for some arbitrary clustering $S$ and a second clustering $\hat{S}$ that is equivalent to $S$ but with a single data point belonging to a different cluster. Given a GTR $R$ and a corresponding AGTR $\hat{R}$, we can sequentially change the cluster membership of $\epsilon$ data points in $R$ to obtain $\hat{R}$. At each step the recall value can change by at most $\pm\frac{1}{m}$. Therefore, $\lvert$\textit{Recall(C, R)} $-$ \textit{Recall(C, $\hat{R}$)}$\rvert$ $\leq \frac{\epsilon}{m}$.
\end{proof}

\vspace*{0.25cm}

\begin{corollary}
$\lvert$\textit{Recall(C, D)} $-$ \textit{Recall(C, $\hat{R}$)}$\rvert$ $\leq \frac{\delta}{m}$
\end{corollary}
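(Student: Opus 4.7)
The plan is to mirror the proof of Corollary \ref{thm:precisionerrbound} and reduce the result to a $\delta$-fold application of Theorem \ref{thm:recall1err}. First, I would unpack the definition of $\delta$ from Subsection \ref{sec:trivial}: $\delta$ is the minimum number of data points whose cluster membership in $\hat{R}$ must be altered in order to produce the ground truth reference clustering $D$. Fixing such a minimum-cost sequence of single-point reassignments, I obtain a chain of partitions $\hat{R} = P_0, P_1, \dots, P_\delta = D$ where each $P_{k+1}$ differs from $P_k$ in the cluster membership of exactly one data point.

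Next, I would invoke Theorem \ref{thm:recall1err}, which states that for any two partitions $S, \hat{S}$ of $M$ that differ on a single data point, $\lvert\textit{Recall}(C,S) - \textit{Recall}(C,\hat{S})\rvert \leq \tfrac{1}{m}$. Applying this at each successive step of the chain yields
\[
\lvert \textit{Recall}(C,P_k) - \textit{Recall}(C,P_{k+1}) \rvert \leq \tfrac{1}{m}
\quad \text{for } 0 \leq k < \delta.
\]
A straightforward telescoping application of the triangle inequality then gives
\[
\lvert \textit{Recall}(C, D) - \textit{Recall}(C, \hat{R}) \rvert
\;=\; \Bigl\lvert \sum_{k=0}^{\delta-1}\bigl(\textit{Recall}(C,P_{k+1}) - \textit{Recall}(C,P_k)\bigr) \Bigr\rvert
\;\leq\; \frac{\delta}{m},
\]
which is the desired bound.

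There is essentially no obstacle here beyond careful bookkeeping: Theorem \ref{thm:recall1err} is stated for arbitrary partitions $S$ and $\hat{S}$ (not just GTRs or AGTRs), so the intermediate partitions $P_k$ in the chain need not satisfy any structural property, and the argument goes through unchanged. The only thing to be explicit about is that $\delta$ is by definition the minimum number of such single-point reassignments, so a chain of length $\delta$ is guaranteed to exist; any larger chain would only give a weaker bound, so taking the minimum is exactly what makes the inequality tight in form. The proof is effectively identical in structure to that of Corollary \ref{thm:precisionerrbound}, with ``precision'' replaced by ``recall'' and Theorem \ref{thm:precision1err} replaced by Theorem \ref{thm:recall1err}.
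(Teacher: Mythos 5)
Your proof is correct and follows essentially the same route as the paper's: both construct a chain of $\delta$ single-point reassignments from $\hat{R}$ to $D$ and apply Theorem \ref{thm:recall1err} at each step, accumulating at most $\frac{1}{m}$ of change per step. Your version is in fact slightly more careful, since you make the telescoping triangle-inequality step explicit and correctly use $\delta$ throughout, whereas the paper's proof text writes ``$\epsilon$ data points'' where it means $\delta$.
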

\begin{proof}[Proof of Corollary \ref{thm:recallerrbound}] \label{prf:recallerrbound}
By Theorem \ref{thm:recall1err}, \textit{Recall(C, S)} $-$ $\frac{1}{m} \leq$ \textit{Recall(C, $\hat{S}$)} $\leq$ \textit{Recall(C, S)} $+$ $\frac{1}{m}$ for some arbitrary clustering $S$ and a second clustering $\hat{S}$ that is equivalent to $S$ but with a single data point belonging to a different cluster. Given a ground truth reference label clustering $D$ and a corresponding AGTR $\hat{R}$,we can sequentially change the cluster membership of $\epsilon$ data points in $D$ to obtain $\hat{R}$. At each step the recall value can change by at most $\pm\frac{1}{m}$. Therefore, $\lvert$\textit{Recall(C, D)} $-$ \textit{Recall(C, $\hat{R}$)}$\rvert$ $\leq \frac{\delta}{m}$.
\end{proof}

\vspace*{0.25cm}

\begin{theorem}
If $\hat{\epsilon} \geq \epsilon$ then \textit{Precision(C, $\hat{R}$)} $-$ $\frac{\hat{\epsilon}}{m}$ $\leq$ \textit{Precision(C, D)}
\end{theorem}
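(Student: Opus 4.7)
The plan is to chain together Theorem~\ref{thm:precisiongtr} with Corollary~\ref{thm:precisionerr} and then slacken the bound using $\hat{\epsilon} \geq \epsilon$. Concretely, the argument has three moves, all of which are forced once the right inequality is pointed in the right direction; the only thing to be careful about is the sign.

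First I would unpack Corollary~\ref{thm:precisionerr}, which tells us that $\lvert \textit{Precision}(C,R) - \textit{Precision}(C,\hat{R}) \rvert \leq \epsilon/m$. The one-sided consequence we need is the lower bound
\[
\textit{Precision}(C,R) \;\geq\; \textit{Precision}(C,\hat{R}) \;-\; \tfrac{\epsilon}{m},
\]
so I would explicitly extract this from the absolute value. Note that $R$ is the (hypothetical) true GTR that agrees with $\hat{R}$ except on the $\epsilon$ erroneous points; its existence is exactly what Definition~\ref{def:agtr} asserts, so even though we cannot identify $R$ in practice, we are entitled to reason about it.

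Second, I would apply Theorem~\ref{thm:precisiongtr} to this $R$ to obtain $\textit{Precision}(C,R) \leq \textit{Precision}(C,D)$. Combining with the lower bound from the first step yields
\[
\textit{Precision}(C,\hat{R}) - \tfrac{\epsilon}{m} \;\leq\; \textit{Precision}(C,R) \;\leq\; \textit{Precision}(C,D).
\]

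Finally, I would use the assumption $\hat{\epsilon} \geq \epsilon$, which implies $\hat{\epsilon}/m \geq \epsilon/m$, and hence $\textit{Precision}(C,\hat{R}) - \hat{\epsilon}/m \leq \textit{Precision}(C,\hat{R}) - \epsilon/m$. Transitivity of $\leq$ through the previous display then gives the claim. The main ``obstacle'' is really just bookkeeping: keeping straight which inequality is an upper bound and which is a lower bound (since the $\hat{\epsilon}$ slack is what lets us replace the unknown true $\epsilon$ with the user-chosen $\hat{\epsilon}$), and being explicit that the GTR $R$ whose existence Definition~\ref{def:agtr} guarantees is the object to which Theorem~\ref{thm:precisiongtr} is applied, even though it is not known concretely.
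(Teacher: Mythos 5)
Your proposal is correct and follows essentially the same route as the paper's proof: extract the one-sided bound $\textit{Precision}(C,\hat{R}) - \epsilon/m \leq \textit{Precision}(C,R)$ from Corollary~\ref{thm:precisionerr}, chain it with Theorem~\ref{thm:precisiongtr} to reach $\textit{Precision}(C,D)$, and slacken via $\hat{\epsilon} \geq \epsilon$. Your added remark that the unknown GTR $R$ exists by Definition~\ref{def:agtr} and can legitimately be reasoned about is a nice explicit touch the paper leaves implicit, but the argument is the same.
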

\begin{proof} [Proof of Theorem \ref{thm:precisionagtr}] \label{prf:precisionagtr}
By \autoref{thm:precisionerr}, $\lvert$\textit{Precision(C, R)} $-$ \textit{Precision(C, $\hat{R}$)}$\rvert$ $\leq \frac{\epsilon}{m}$. This can be written as \textit{Precision(C, $\hat{R}$)} $-$ $\frac{\epsilon}{m}$ $\leq$ \textit{Precision(C, R)}. By applying \autoref{thm:precisiongtr}, \textit{Precision(C, $\hat{R}$)} $-$ $\frac{\hat{\epsilon}}{m}$ $\leq$ \textit{Precision(C, $\hat{R}$)} $-$ $\frac{\epsilon}{m}$ $\leq$ \textit{Precision(C, R)} $\leq$ \textit{Precision(C, D)}.
\end{proof}

\vspace*{0.25cm}

\begin{theorem}
If $\hat{\epsilon} \geq \epsilon$ then \textit{Recall(C, $\hat{R}$)} + $\frac{\hat{\epsilon}}{m}$ $\geq$ \textit{Recall(C, D)}
\end{theorem}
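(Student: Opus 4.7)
The plan is to mirror the structure used in the proof of \autoref{thm:precisionagtr} almost exactly, chaining together the already-established approximation corollary for recall with the GTR upper bound for recall. The two ingredients I would invoke are \autoref{thm:recallerr2} (the $\epsilon/m$ stability statement relating \textit{Recall(C, R)} and \textit{Recall(C, $\hat{R}$)}) and \autoref{thm:recallgtr} (which gives \textit{Recall(C, R)} $\geq$ \textit{Recall(C, D)}).

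First I would unfold the absolute value in \autoref{thm:recallerr2} into the one-sided inequality that matters here, namely \textit{Recall(C, R)} $\leq$ \textit{Recall(C, $\hat{R}$)} $+ \frac{\epsilon}{m}$. Next I would use the hypothesis $\hat{\epsilon} \geq \epsilon$ to weaken the right-hand side and obtain \textit{Recall(C, R)} $\leq$ \textit{Recall(C, $\hat{R}$)} $+ \frac{\hat{\epsilon}}{m}$. Finally I would chain this with \autoref{thm:recallgtr}, which ensures \textit{Recall(C, D)} $\leq$ \textit{Recall(C, R)}, giving the desired inequality \textit{Recall(C, D)} $\leq$ \textit{Recall(C, $\hat{R}$)} $+ \frac{\hat{\epsilon}}{m}$.

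I do not expect any real obstacle: all the technical work has already been done in proving \autoref{thm:recall1err}, \autoref{thm:recallerr2}, and \autoref{thm:recallgtr}. The only thing to be careful about is the direction of the inequality when moving between $R$ and $\hat{R}$ (since \autoref{thm:recallerr2} is two-sided, one must pick the correct side) and the sign convention when replacing $\epsilon$ by $\hat{\epsilon}$ (it weakens the upper bound on \textit{Recall(C, R)}, which is exactly what we want). If anything might warrant a sentence of explanation, it is just noting explicitly that \autoref{thm:recallgtr} was proved under the same $f'$, $g'$ maximum-overlap mapping convention used on the AGTR side, so the composition of the two bounds is valid without redefining the cluster mapping functions.
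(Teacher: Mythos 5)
Your proposal is correct and matches the paper's own proof essentially verbatim: it unfolds the one-sided direction of the $\frac{\epsilon}{m}$ stability corollary for recall, weakens $\epsilon$ to $\hat{\epsilon}$, and chains with Theorem~\ref{thm:recallgtr} to reach \textit{Recall(C, D)}. No further comment is needed.
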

\begin{proof}[Proof of Theorem \ref{thm:recallagtr}] \label{prf:recallagtr}
By \autoref{thm:recallerr}, $\lvert$\textit{Recall(C, R)} $-$ \textit{Recall(C, $\hat{R}$)}$\rvert$ $\leq \frac{\epsilon}{m}$. This can be written as \textit{Recall(C, $\hat{R}$)} $+$ $\frac{\epsilon}{m}$ $\geq$ \textit{Recall(C, R)}. Therefore, by \autoref{thm:recallgtr}, \textit{Recall(C, $\hat{R}$)} $+$ $\frac{\hat{\epsilon}}{m}$ $\geq$ \textit{Recall(C, $\hat{R}$)} $+$ $\frac{\epsilon}{m}$ $\geq$ \textit{Recall(C, R)} $\geq$ \textit{Recall(C, D)}.
\end{proof}

\vspace*{0.25cm}

\begin{corollary}
If $\hat{\epsilon} \geq \epsilon$ then \textit{Recall(C, $\hat{R}$)} + $\frac{\hat{\epsilon}}{m}$ $\geq$ \textit{Accuracy(C, D)}
\end{corollary}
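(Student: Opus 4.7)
The plan is to derive this corollary by concatenating two results already established in the excerpt: \autoref{thm:recallagtr}, which upper bounds \textit{Recall(C, D)} by \textit{Recall(C, $\hat{R}$)} $+ \frac{\hat{\epsilon}}{m}$, together with the observation used inside \autoref{cor:accuracygtr} that the arg max recall dominates accuracy. Since the target statement is essentially the accuracy-analogue of \autoref{thm:recallagtr}, no new machinery should be required.

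First I would invoke \autoref{thm:recallagtr} under the assumed hypothesis $\hat{\epsilon} \geq \epsilon$ to obtain \textit{Recall(C, $\hat{R}$)} $+ \frac{\hat{\epsilon}}{m} \geq$ \textit{Recall(C, D)}; this handles the passage from the AGTR $\hat{R}$ to the ground truth reference clustering $D$. Then I would show \textit{Recall(C, D)} $\geq$ \textit{Accuracy(C, D)} by reusing the argument from the proof of \autoref{cor:accuracygtr}: in Definition \ref{def:recall}, the choice $g(j) = \arg\max_i |C_i \cap D_j|$ maximizes each summand $|C_{g(j)} \cap D_j|$ over $i$, so \textit{Recall(C, D)} with this $g$ is at least as large as \textit{Recall(C, D)} obtained when $g$ is taken to be the identity map, and the latter coincides with \textit{Accuracy(C, D)} by Definition \ref{def:accuracy}. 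Chaining the two inequalities by transitivity then yields \textit{Recall(C, $\hat{R}$)} $+ \frac{\hat{\epsilon}}{m} \geq$ \textit{Recall(C, D)} $\geq$ \textit{Accuracy(C, D)}, which is the target.

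I do not foresee any real obstacle — the argument is a two-step concatenation of already-proven bounds. The only subtle point requiring attention is that \textit{Accuracy} presupposes a one-to-one correspondence between the two label sets ($c = d$) so that the identity mapping is well-defined; this prerequisite is inherited from Definition \ref{def:accuracy} and is already in force in \autoref{cor:accuracygtr}, so it introduces no new assumption beyond those already present in the earlier accuracy result.
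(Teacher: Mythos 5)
Your proposal is correct and takes essentially the same route as the paper: both arguments reduce to the transitive chain \textit{Recall(C, $\hat{R}$)} $+ \frac{\hat{\epsilon}}{m} \geq$ \textit{Recall(C, R)} $\geq$ \textit{Recall(C, D)} $\geq$ \textit{Accuracy(C, D)}, with the paper citing \autoref{thm:recallerr} and \autoref{cor:accuracygtr} while you equivalently cite \autoref{thm:recallagtr} and re-derive the arg-max-dominates-identity step. The difference is only in how the already-proven inequalities are grouped, not in substance.
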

\begin{proof}[Proof of Corollary \ref{cor:accuracyagtr}]
By \autoref{thm:recallerr}, $\lvert$\textit{Recall(C, R)} $-$ \textit{Recall(C, $\hat{R}$)}$\rvert$ $\leq \frac{\epsilon}{m}$. This can be written as \textit{Recall(C, $\hat{R}$)} $+$ $\frac{\epsilon}{m}$ $\geq$ \textit{Recall(C, R)}. Using \autoref{cor:accuracygtr}, \textit{Recall(C, $\hat{R}$)} $+$ $\frac{\hat{\epsilon}}{m}$ $\geq$ \textit{Recall(C, $\hat{R}$)} $+$ $\frac{\epsilon}{m}$ $\geq$ \textit{Recall(C, R)} $\geq$ \textit{Accuracy(C, D)}.
\end{proof}

\end{appendices}

\end{document}